\documentclass{article}
\usepackage{PRIMEarxiv}
\DeclareMathSizes{6.2}{6.2}{5}{5}
\DeclareMathSizes{7.2}{7.2}{5}{5}

\usepackage{amsmath,amsfonts,bm}

\def\eqref#1{equation~\ref{#1}}

\def\1{\bm{1}}

\DeclareMathAlphabet{\mathsfit}{\encodingdefault}{\sfdefault}{m}{sl}
\SetMathAlphabet{\mathsfit}{bold}{\encodingdefault}{\sfdefault}{bx}{n}

\usepackage{natbib}
\usepackage{enumitem}
\usepackage{hyperref}
\usepackage{url}
\usepackage[T1]{fontenc}
\usepackage{amsmath,amsfonts,amssymb,mathtools,amsthm}
\allowdisplaybreaks[2]
\usepackage{graphicx}
\usepackage{tikz}
\usetikzlibrary{arrows.meta}
\usepackage{microtype}
\newtheorem{theorem}{Theorem}
\usepackage{multirow}
\usepackage{multicol}
\usepackage{siunitx}
\usepackage{tabularx}
\usepackage{subcaption}
\usepackage{algorithm}
\usepackage{algpseudocode}
\usepackage{float}

\newtheorem{proposition}{Proposition}
\hypersetup{
  hidelinks,
  pdftitle={JIVE: Jacobian-Informed Volume Expansion for Diverse Generative Sampling},
  pdfauthor={Guangxun Zhang, Brian Cai, Boxuan Zhang, Chao Chen, and Ruixiang Tang}
}
\setcitestyle{citesep={,}}
\usepackage{booktabs}
\usepackage[table]{xcolor}
\usepackage{makecell}
\usepackage{threeparttable}
\usepackage{enumitem}
\usepackage{placeins}
\graphicspath{{./}{./figures/}}

\title{JIVE: Jacobian-Informed Volume Expansion for Diverse Generative Sampling}

\author{
{\normalfont
Guangxun Zhang$^1$\thanks{Equal contribution.} \quad
Brian Cai$^2$\footnotemark[1] \quad
Boxuan Zhang$^3$ \quad
Chao Chen$^2$\thanks{Equal corresponding authors.} \quad
Ruixiang Tang$^3$\footnotemark[2]} \\[0.6em]
{\normalfont $^1$New York University \quad
$^2$Stony Brook University \quad
$^3$Rutgers University}
}

\definecolor{paperBlue}{HTML}{35618A}
\definecolor{paperOrange}{HTML}{C65D3A}
\definecolor{paperBlueLight}{HTML}{EEF3F7}
\definecolor{paperOrangeLight}{HTML}{FAEFEA}
\colorlet{bestAcol}{paperBlue}
\colorlet{bestBcol}{paperOrange}
\colorlet{bestCcol}{paperBlue}
\colorlet{bestDcol}{paperOrange}

\newcommand{\meanse}[2]{#1{\fontsize{6}{7}\selectfont$\pm$#2}}

\begin{document}

\maketitle
\raggedbottom

\begin{abstract}

Generative models often suffer from mode collapse and limited sample diversity. While prior works attempt to mitigate this by jointly generating a batch of samples and repelling their trajectories, these heuristics do not explicitly maximize the diversity of the resulting endpoints. We introduce JIVE, a training-free framework that enhances generative diversity by injecting velocity perturbations aligned with the leading right singular subspace of the generator's endpoint Jacobian. By leveraging this local geometric structure, JIVE provably maximizes endpoint diversity while preserving sample quality. To maintain practical efficiency, we compute these perturbation directions via matrix-free iterations rooted in classical numerical linear algebra, requiring only a small computational overhead. Across different benchmarks, JIVE boosts both pixel and feature-level diversity in few-step and one-step generation. Code is available at \href{https://github.com/guangxunZhang/JIVE}{\textcolor{blue}{https://github.com/guangxunZhang/JIVE}}.

\end{abstract}

\section{Introduction}
\label{sec:intro}

Diffusion and flow-based generative models are capable of producing high fidelity and prompt-aligned images \citep{ho2020denoising,song2020score,lipman2023flow,liu2023flow}. A large body of effort has been put towards improving the fidelity and speed of such models, yet much less attention is given to the \emph{diversity} of generated outputs. For a given prompt, different noise seeds often produce images with similar composition and semantic features. Low diversity limits the model's usefulness in downstream applications such as data augmentation, content creation, and exploration. This issue is especially problematic in few-step models \citep{sauer2024fast,blackforestlabs2024flux,gandikota2025distillingdiversitycontroldiffusion}, where fewer sampling steps leave less opportunity for trajectories to diverge or for iterative diversity guidance to take effect.

To add diversity into the generation process, a natural solution is to inject a perturbation to the velocity at a sampling point. This has the effect of perturbing the trajectory and thus the endpoint at the target distribution. 
Most prior methods \citep{wu2025oscar,morshed2025diverseflow,corso2024particle} sample multiple seeds at once, and at any time point push these samples away from each other to maximize their difference. We refer to these as batch-repulsion methods. Intuitively, letting samples repulse against each other at intermediate time steps diversifies the trajectories and generation results.

In this paper, we argue that these batch-repulsion methods are only \textbf{implicitly} diversifying the endpoints of the trajectories. To maximize the diversity of endpoints, we should directly find velocity perturbations that maximally change the estimated endpoint. 
To find such velocity perturbations, we propose to interrogate the generator's endpoint Jacobian, which effectively measures how all local perturbations impact the estimated endpoint. 
In particular, we sample our perturbations from the leading right singular subspace of the endpoint Jacobian, which is the subspace spanned by the top-$k$ right singular vectors corresponding to the top-$k$ singular values. \textbf{Our approach finds perturbations that explicitly maximize endpoint diversity.}
See Figure \ref{fig:local-branching}. 

Our method selects perturbations based only on how they affect a single trajectory. This raises the question of how our method could guarantee overall diversity of generated results. We answer this with a theoretical result (Theorem \ref{thm:pairwise-diversity}) that our method increases the pairwise distance between endpoints of multiple generations, and that maximizing local endpoint change also maximizes the increase in spread among generated points.

Algorithmically, we make our method efficient by avoiding explicitly forming the full Jacobian matrix or its singular value decomposition. We instead iterate a random vector towards a top right singular vector with Jacobian and Jacobian-transpose vector products. We also show our method works when power iteration \citep{golub2013matrix}  is applied only a single time instead of requiring full convergence towards a top singular vector. This ensures our method is efficient in actual deployable settings.

\begin{figure}[t]
\centering
\begin{tikzpicture}

\definecolor{figOneInk}{HTML}{35618A}

\node[anchor=south west,inner sep=0] (image) at (0,0)
  {\includegraphics[width=\linewidth]{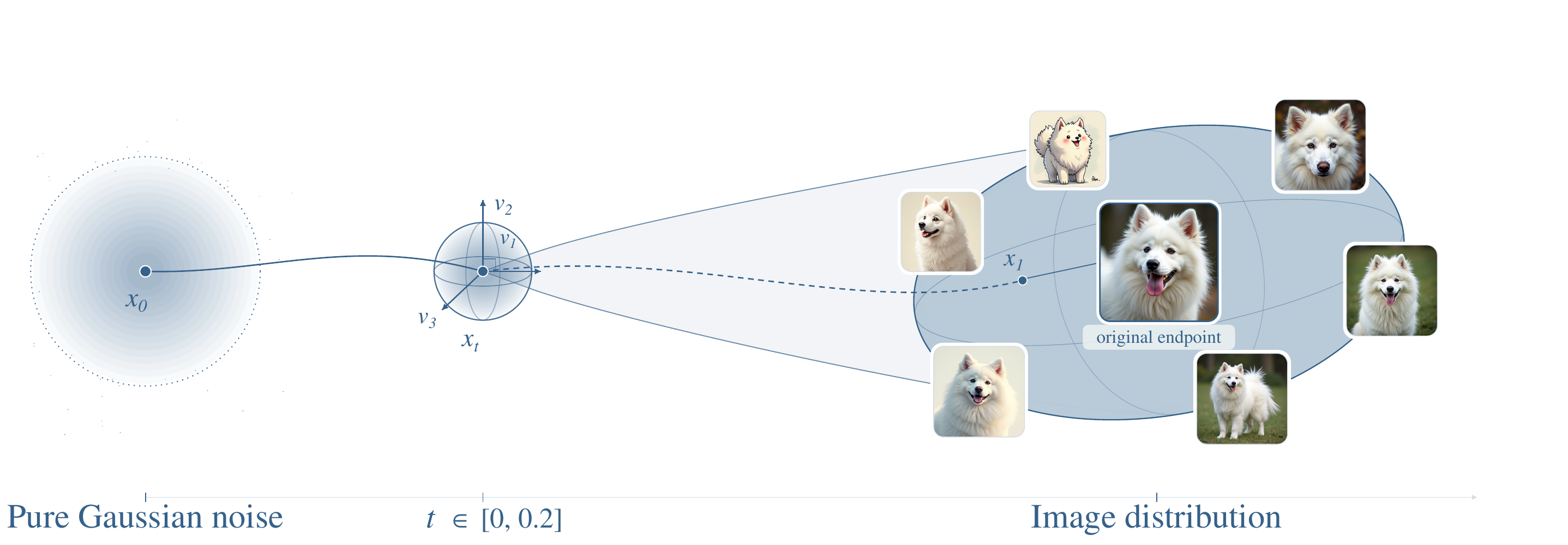}};

\begin{scope}[x={(image.south east)},y={(image.north west)}]

\node[
  text=figOneInk,
  font=\scriptsize,
  align=center
] (patchlabel) at (0.410,0.780)
  {Local perturbation\\space};

\draw[
  figOneInk,
  line width=0.45pt,
  -{Latex[length=1.1mm,width=0.8mm]}
]
  ([yshift=-1pt]patchlabel.south)
  -- (0.338,0.520);

\node[
  text=figOneInk,
  font=\scriptsize,
  align=center
] (volumelabel) at (0.78,0.88)
  {Maximal endpoint volume};

\draw[
  figOneInk,
  line width=0.45pt,
  -{Latex[length=1.1mm,width=0.8mm]}
]
  ([yshift=-1pt]volumelabel.south)
  -- (0.764,0.770);

\end{scope}

\end{tikzpicture}

\caption{
At an intermediate 
trajectory point $x_t$, JIVE injects a local velocity perturbation sampled from the \textbf{leading right 
singular subspace of the endpoint Jacobian}. This explicitly 
maximizes \textbf{endpoint diversity}, as theoretically 
guaranteed by Theorem \ref{thm:pairwise-diversity}. 
}
\label{fig:local-branching}

\end{figure}

\begin{figure}[bt!]
    \centering
    \includegraphics[width=\linewidth]{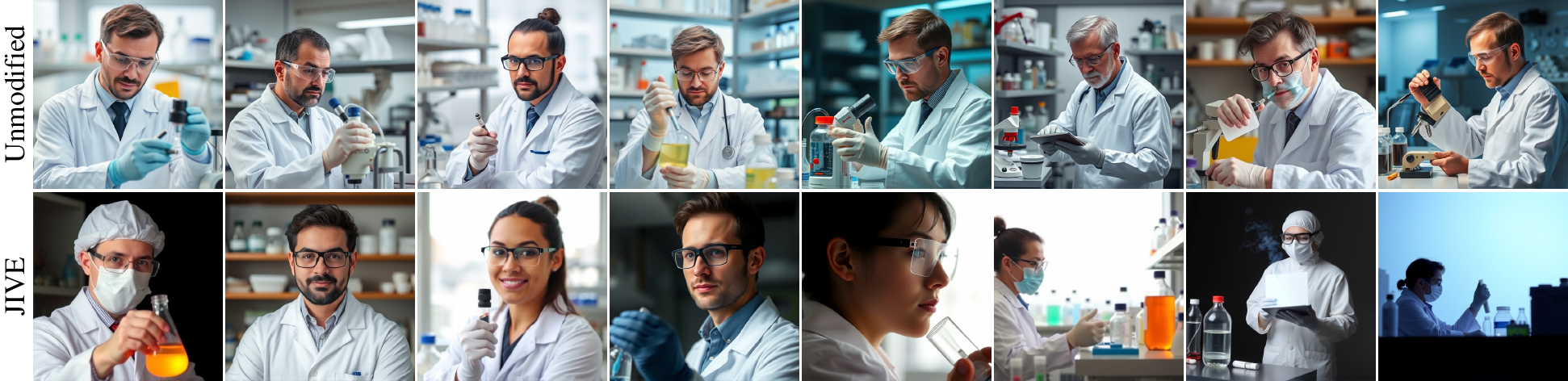}
    \caption{\textbf{Qualitative comparison from matched initial latents.}
    Visual comparison of eight samples generated with 4-step FLUX.1-schnell using the same initial seeds for the prompt ``\textit{\textbf{A scientist}}''.
    The samples of the original method (unmodified) have similar compositions, whereas JIVE produces  distinct subjects and backgrounds.}
    \label{fig:comparison}
\end{figure}


In summary, our paper consists of the following contributions.
\begin{itemize}[topsep=2pt, itemsep=2pt, parsep=0pt, partopsep=0pt]
\item We propose a novel method to inject diversity into modern flow matching generative models. Unlike previous methods, we explicitly perturb the velocity to maximize the endpoint change by sampling directions from the top singular value space of the endpoint Jacobian.

\item We provide a theoretical guarantee for the diversity improvement of the endpoints. 

\item We provide efficient algorithms to compute the velocity perturbation even though the Jacobian itself can have millions of entries. 
\end{itemize}

Across different standard benchmarks \citep{yu2022scaling, ghosh2023geneval}, JIVE consistently enhances generation diversity across multiple metrics, including Feature Vendi and Pixel Vendi. It demonstrates superior or on par performance compared with baseline flow matching models and existing state-of-the-art methods.
Figure \ref{fig:comparison} illustrates the diversity introduced by our method while maintaining the original semantics from the text prompt.


\begin{figure}[!b]
\centering
\includegraphics[width=0.9\textwidth]{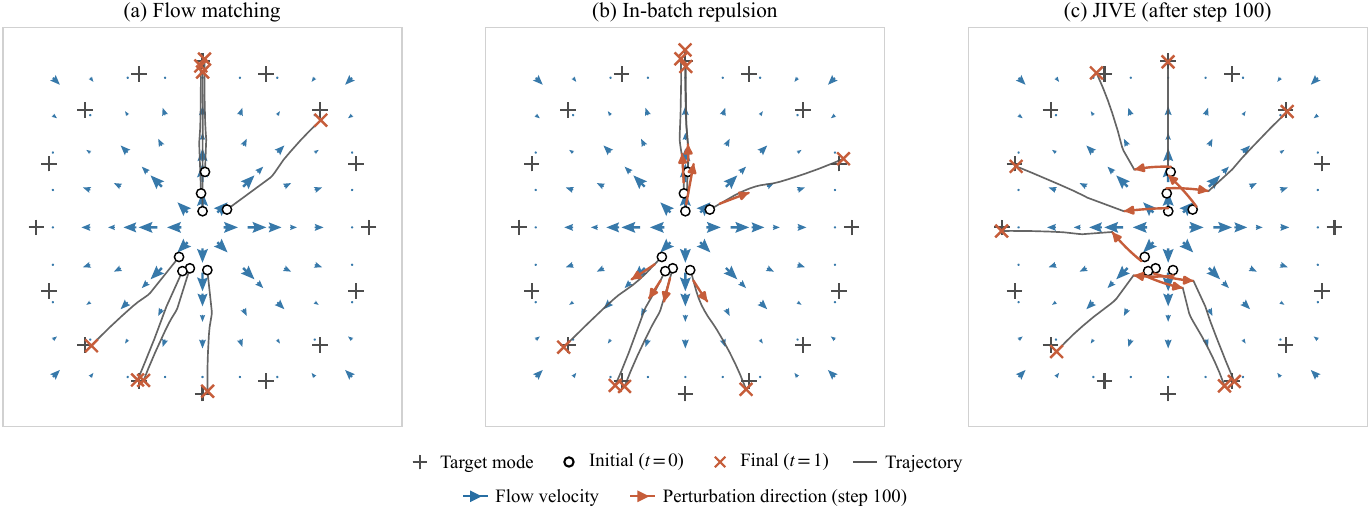}
\caption{ In a toy example, we compare unmodified flow matching (Fig.(a)), a batch repulsion method (OSCAR) (Fig.(b)) and JIVE (Fig.(c)). Eight samples flow from the center toward sixteen Gaussian modes arranged on a ring. Blue arrows are the velocity field, orange arrows are velocity perturbations, and gray curves are the trajectories of the samples.
In panel (a), many points flow to the same few modes. In panel (b), batch repulsion pushes trajectories further apart, but the repulsion direction does not maximize endpoint spread. In panel (c), JIVE finds perturbations maximizing endpoint diversity and pushes points towards new modes. See Appendix~\ref{app:repulsion-limits} for details.}
\label{fig:oscar_steps}
\end{figure}

\section{Method}
\label{sec:method}

\noindent \textbf{Preliminaries.}
Flow matching learns a velocity field $u_\theta(x,t)$ that transports Gaussian noise to data \citep{lipman2023flow,liu2023flow}. To train the model, we interpolate a noise
sample $x_0$ and a data sample $x_1$ as
$x_t=(1-t)x_0+tx_1$, and minimize the loss
$\mathbb E_{x_0,x_1,t}\|u_\theta(x_t,t)-(x_1-x_0)\|_2^2$. During sampling, we start from $x_0\sim\mathcal N(0,I_d)$ and integrate the velocity field $\mathrm dx_t=u_\theta(x_t,t)\,\mathrm dt$ from noise at $t=0$ to data at $t=1$. Although sampling is iterative, we can find a quick approximation of a trajectory's endpoint using the \emph{one-step predictor}
$F_t(x)=x+(1-t)u_\theta(x,t)$. 
We use this predictor to analyze local generator behavior in the following sections.

\subsection{Motivation: independent direction selection}
\label{sec:motivation}
\label{subsec:few-step-motivation}
JIVE finds perturbation directions that change a trajectory's endpoint the most.
Prior methods mostly achieve this goal by sampling a batch of seeds together and pushing their trajectories apart during sampling \citep{wu2025oscar,morshed2025diverseflow}. For example, OSCAR \citep{wu2025oscar} maximizes the volume spanned by trajectory endpoints. We demonstrate with our toy model in Figure \ref{fig:oscar_steps} how this strategy might fail to maximally spread endpoints.

The toy example in Figure \ref{fig:oscar_steps} consists of sixteen Gaussian modes whose centers are placed on the boundary of a ring. These modes serve as the target distribution of a flow matching model. We sample initial points from a Gaussian in the center of the plane. We equip the example with the velocity field that minimizes the flow matching loss. In this example, the field points toward the closest Gaussian mode at any location in this plane. We initialize eight starting points near the center and have them travel along the velocity field. Using the original flow matching velocity field, many starting points end up in the same mode due to their similar initial positions (Figure \ref{fig:oscar_steps}(a)), showing low diversity. 

To perturb points towards new modes we can push them using directions that rotate the point along the ring. This changes the point's closest Gaussian mode and redirects the trajectory towards a different endpoint. For this example, OSCAR’s volume maximizing objective (Figure \ref{fig:oscar_steps}(b)) generates perturbation directions (red arrows) that point directly towards the outer circle, and remain the same across all update steps. This is because the volume gradient mainly points outwards from the mean of the batch of points. 

We argue that useful perturbation directions can be derived using local geometry. The endpoint predictor's Jacobian provides a tool for analyzing local geometry since it tells how local input directions are transformed into changes in the output.
For example, in Figure \ref{fig:oscar_steps} the rotational direction is amplified by the Jacobian matrix. By multiplying a random vector with the Jacobian, the rotational component of the random vector survives while other components become discarded. JIVE uses this idea to select directions independently for each
sample. Figure \ref{fig:oscar_steps}(c) shows how JIVE is able to place most points into new modes and how perturbations are along rotational components. The next section formalizes our direction selection criterion
and describes how JIVE estimates the perturbation subspace.

\subsection{Algorithm}
\label{subsec:algo}
We seek velocity perturbations that produce the greatest change in the endpoint of a sampling trajectory. We therefore use the \emph{leading right singular subspace} of the endpoint predictor Jacobian $J_t=\nabla F_t(x_t)$, i.e., the subspace spanned by the right singular vectors with the $k$ largest singular values. Perturbation vectors sampled from such a subspace cause the \emph{greatest endpoint change} to the generator. Formally, let $Q\in\mathbb R^{d\times k}$ be an
orthonormal basis for the leading right singular subspace at a trajectory point $x_t$. We sample a perturbation
$\delta$ uniformly from the sphere of radius $\varepsilon$ in
$\operatorname{span}(Q)$, and resume sampling from $x_t+\delta$. In practice, we only need to perturb the trajectory once.

To pick an appropriate norm of the velocity perturbation, we use the heuristic that the distribution of $x_t+\delta$ should not be too different from the distribution of $x_t$. We choose a bound for the norm of $\delta$. For $x_t\sim\mathcal{N}(0,I_d)$, the norm of $x_t$ is almost always near $\sqrt{d}$. The standard deviation of the norm is about $1/\sqrt{2}$. We want the norm of $x_t+\delta$ to stay near $\sqrt{d}$. We found that $\delta$ and $x_t$ are usually nearly orthogonal to each other, so we can write $\|x_t+\delta\|_2^2=\|x_t\|_2^2+\|\delta\|_2^2$. 
We try to not have $\|x_t+\delta\|_2=\sqrt{\|x_t\|_2^2+\|\delta\|_2^2}$ be too much greater than $\sqrt{d}+c/\sqrt{2}$ where $c$ is some constant representing the number of standard deviations by which we allow the norm to deviate. On Flux.1-schnell which has latent dimension $64\times64\times16$ at $512\times512$ resolution and $c=2$, the bound for the norm with our heuristic is $26$.

\subsection{Theoretical guarantee of gained diversity}
\label{subsec:pairwise-diversity}
The preceding analysis selects directions using only the local endpoint change of individual trajectories.
We now show how local perturbations affect global spread among many independently generated outputs, measured by expected pairwise squared endpoint distance. Perturbations with zero mean conditional on the latent add covariance to each linearized endpoint, yielding the following connection to our local objective.
\begin{theorem}[Perturbations increase expected pairwise distance]
\label{thm:pairwise-diversity}
Let $X$ be a random latent, $J(X)=\nabla F_t(X)$, and $Q(X)$ have $k$ orthonormal columns. Conditional on $X$, draw $Z$ uniformly on the unit sphere in $\mathbb R^k$ and set $\delta=\varepsilon Q(X)Z$ so that $\|\delta\|_2=\varepsilon$. Define $Y=F_t(X)$ and $\widetilde Y=Y+J(X)\delta$. 
The perturbed endpoints have expected squared pairwise distance 
\[\mathbb E_{X_1,X_2}
    \|\widetilde Y_1-\widetilde Y_2\|_2^2=\mathbb E_{X_1,X_2}
    \|Y_1-Y_2\|_2^2+\xi_{\epsilon,k}, 
    where 
\]
\begin{equation}
\label{eq:increase}
\xi_{\epsilon,k}=
\frac{2\varepsilon^2}{k}\,\mathbb E_{X}
\operatorname{Tr}\!\left[
Q(X)^\top J(X)^\top J(X)Q(X)
\right].
\end{equation}
This theorem means that the proposed velocity perturbation will increase the expected spread of linearized endpoints by $\xi_{\epsilon,k}$. Note the gap $\xi_{\epsilon,k}$ is controlled by the perturbation magnitude $\epsilon$ and increases quadratically with it. 

\end{theorem}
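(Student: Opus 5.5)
We would prove Theorem~\ref{thm:pairwise-diversity} by a direct second-moment computation. Throughout, $(X_1,\delta_1)$ and $(X_2,\delta_2)$ are taken as independent copies of the pair $(X,\delta)$. Concretely, $X_1,X_2$ are i.i.d., and given the latents, $Z_1,Z_2$ are drawn independently and uniformly on the unit sphere $S^{k-1}$. We also assume $\mathbb E\|Y\|_2^2<\infty$ and $\mathbb E\|J(X)\|^2<\infty$, so that all expectations below are finite.

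Set $W_i=J(X_i)\delta_i=\varepsilon J(X_i)Q(X_i)Z_i$. Then $\widetilde Y_1-\widetilde Y_2=(Y_1-Y_2)+(W_1-W_2)$, and expanding the square gives
\[
\|\widetilde Y_1-\widetilde Y_2\|_2^2=\|Y_1-Y_2\|_2^2+2\langle Y_1-Y_2,\,W_1-W_2\rangle+\|W_1-W_2\|_2^2 .
\]
The proof then handles the cross term and the quadratic term separately, both by conditioning on $(X_1,X_2)$.

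\textbf{Cross term.} The key fact is that $Z$ uniform on the sphere is symmetric, so $\mathbb E[Z]=0$. Conditional on $(X_1,X_2)$, the quantities $Y_1$, $Y_2$, $J(X_i)$ and $Q(X_i)$ are fixed. Hence $\mathbb E[W_i\mid X_1,X_2]=\varepsilon J(X_i)Q(X_i)\,\mathbb E[Z_i]=0$. By the tower property, the cross term therefore vanishes.

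\textbf{Quadratic term.} We expand $\|W_1-W_2\|_2^2=\|W_1\|_2^2+\|W_2\|_2^2-2\langle W_1,W_2\rangle$.
\begin{itemize}
\item The term $\langle W_1,W_2\rangle$ has zero conditional expectation, because $Z_1$ and $Z_2$ are conditionally independent with mean zero.
\item For $\|W_i\|_2^2$, we write $M=Q^\top J^\top JQ$ and use the second-moment identity $\mathbb E[ZZ^\top]=\tfrac1k I_k$. This holds because the covariance of $Z$ is rotation invariant, hence a multiple of $I_k$, and its trace is $\mathbb E\|Z\|_2^2=1$.
\item Consequently
\[
\mathbb E[\|W_i\|_2^2\mid X_i]=\varepsilon^2\,\mathbb E[Z^\top M Z]=\varepsilon^2\operatorname{Tr}(M\,\mathbb E[ZZ^\top])=\tfrac{\varepsilon^2}{k}\operatorname{Tr}(M).
\]
\end{itemize}
Taking expectations over $X_i$ and summing over $i=1,2$, both copies contribute equally because $X_1,X_2$ are identically distributed. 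This yields exactly $\xi_{\varepsilon,k}$ in \eqref{eq:increase}.

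The only real subtlety is stating the independence structure precisely: the two perturbations must be conditionally independent given the latents, and each must be mean zero given its own latent. With that fixed, the remaining steps are routine. The quadratic dependence on $\varepsilon$ and the factor $1/k$ then follow immediately from the formula for $\xi_{\varepsilon,k}$.
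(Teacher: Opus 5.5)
Your proof is correct and is essentially the paper's argument. Both rest on $\mathbb E[\delta\mid X]=0$, which kills the cross term, and on $\mathbb E[ZZ^\top]=\tfrac1k I_k$; the paper packages these as $\operatorname{Cov}(\widetilde Y)=\operatorname{Cov}(Y)+\tfrac{\varepsilon^2}{k}\mathbb E[J(X)Q(X)Q(X)^\top J(X)^\top]$ plus the identity $\mathbb E\|W_1-W_2\|_2^2=2\operatorname{Tr}\operatorname{Cov}(W_1)$ for i.i.d.\ vectors, whereas you expand the pairwise distance directly and also make explicit the independence and second-moment assumptions the paper leaves implicit.
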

Figure~\ref{fig:pairwise-diversity} depicts a concrete picture of how perturbations can increase pairwise distances. The trace term of Theorem \ref{thm:pairwise-diversity} connects directly to our endpoint displacement objective. Consider the value $\|F_t(x_t+\delta)-F_t(x_t)\|_2^2$, which captures how far a perturbation $\delta$ causes the endpoint to change. If we uniformly draw perturbations $\delta$ with norm $\varepsilon$ from a $k$ dimensional subspace $Q$, under a linear approximation this term becomes
\begin{equation}
\label{eq:displacement}
\mathbb{E}_{\delta}\|F_t(x_t+\delta)-F_t(x_t)\|_2^2\approx \mathbb{E}_{\delta}\|J_t\delta\|_2^2=\frac{\varepsilon^2}{k}\operatorname{Tr}(Q^\top J_t^\top J_tQ)
\end{equation}
The same trace term appears in the equation as in Theorem \ref{thm:pairwise-diversity}. Due to the expectation term of Equation~(\ref{eq:increase}), optimizing Equation (\ref{eq:displacement}) per individual $x_t$ also optimizes overall spread increase. The top right singular subspace maximizes Equation (\ref{eq:displacement}) due to its trace-maximization property.


\begin{figure}[!htpb]
\centering
\begin{tikzpicture}[x=1cm,y=0.65cm,>=Latex,font=\footnotesize]

\node[font=\footnotesize\bfseries] at (1.4,1.95)
  {Original Endpoints};
\node[font=\footnotesize\bfseries] at (6.3,1.95)
  {Independent Local Variation};
\node[font=\footnotesize\bfseries] at (11.4,1.95)
  {Increased Pairwise Difference};

\draw[paperBlue!45,dashed]
  (0.65,-0.4)--(2.15,-0.4)--(1.4,0.45)--cycle;

\foreach \xx/\yy in {
  0.65/-0.4,
  2.15/-0.4,
  1.4/0.45
}{
  \fill[paperBlue] (\xx,\yy) circle[radius=2.2pt];
}

\draw[->,thick,black!55] (2.8,0)--(4.5,0)
  node[midway,above,align=center,font=\scriptsize]
  {zero-mean\\perturbations};

\foreach \xx/\yy/\ang in {
  5.55/-0.4/20,
  7.05/-0.4/-25,
  6.3/0.45/5
}{
  \begin{scope}[shift={(\xx,\yy)},rotate=\ang]
    \draw[
      paperOrange,
      fill=paperOrangeLight,
      line width=0.65pt
    ]
      (0,0) ellipse[
        x radius=0.58cm,
        y radius=0.24cm
      ];

    \draw[<->,paperOrange,line width=0.65pt]
      (-0.48,0)--(0.48,0);
  \end{scope}

  \fill[paperBlue] (\xx,\yy) circle[radius=2.2pt];
}

\draw[->,thick,black!55]
  (8.05,0)--(9.05,0);

\draw[->,black!45,line width=0.45pt]
  (9.55,-0.65)--(13.25,-0.65);

\draw[->,black!45,line width=0.45pt]
  (9.55,-0.65)--(9.55,0.92);

\draw[black!45,densely dotted]
  (11.4,-0.65)--(11.4,0.86);

\draw[
  paperBlue,
  line width=1pt,
  domain=-1.65:1.65,
  samples=70,
  smooth
]
  plot ({
    11.4+\x
  },{
    -0.65+1.48*exp(-\x*\x/(0.42*0.42))
  });

\draw[
  paperOrange,
  line width=1pt,
  domain=-1.65:1.65,
  samples=70,
  smooth
]
  plot ({
    11.4+\x
  },{
    -0.65+0.6216*exp(-\x*\x)
  });

\node[
  paperBlue,
  font=\scriptsize,
  align=center
] at (10.15,1.12)
  {Original endpoint\\density};

\node[
  paperOrange,
  font=\scriptsize,
  align=center
] at (12.65,1.12)
  {Perturbed endpoint\\density};

\node[
  font=\scriptsize,
  rotate=90
] at (9.25,0.08)
  {Density};

\node[font=\scriptsize] at (11.4,-0.94)
  {Endpoint value};

\end{tikzpicture}

\captionsetup{skip=2.2pt}
\caption{\textbf{From local variation to set-level diversity.}
Three endpoints (blue) are perturbed within centered neighborhoods
(orange). The rightmost curves show the original and perturbed endpoint densities. They have the same endpoint mean, but the perturbed density is broader, which increases the expected pairwise spread.}
\label{fig:pairwise-diversity}
\end{figure}

\subsection{Efficient computation of velocity perturbations}
\label{subsec:compute_direction}
\label{subsec:singular-to-eigen}

In this section, we describe the computation of the velocity perturbation. Recall we need to sample a random direction from the top-$k$ leading right singular subspace of the endpoint Jacobian. Direct computation of the Jacobian is infeasible. Instead, we propose an efficient algorithm inspired by the classic power method from numerical linear algebra \citep{golub2013matrix}. At a latent trajectory state $x_t$, we directly sample the velocity perturbation without explicitly computing the Jacobian $J_t=\nabla F_t(x_t)$, in which $F_t(x_t)=x+(1-t)u_\theta(x_t,t)$ is the one-step predictor of the endpoint.

\textbf{Velocity Perturbation Computation via Power Method.} To sample a velocity perturbation, we draw an isotropic random vector $w$ and multiply it repeatedly by the matrix $J_t^\top J_t$. Take its singular value decomposition $J_t= U\Sigma V^\top
    = \sum_{i=1}^{d} \sigma_i u_i v_i^\top$. We can decompose $w$ into the basis of $v_i$ as $w=\sum_{i=1}^d\langle w,v_i\rangle v_i$. 
    Since $J_t w = \sum_{i=1}^d \sigma_i \langle w, v_i \rangle u_i$, multiplying by $J_t$ scales each component of $w$ along the right singular vector $v_i$ by $\sigma_i$ and maps it onto the left singular vector $u_i$. The subsequent product with $J_t^\top$ then pulls this response back into the input space, mapping the components back onto the right singular vectors via $J_t^\top J_t w = \sum_{i=1}^d \sigma_i^2 \langle w, v_i \rangle v_i$. Consequently, inputs aligned with the leading right singular vectors experience quadratic amplification ($\sigma_i^2$) along their original directions.


We avoid explicitly forming $J_t^\top J_t$ by relying on matrix-free operations. Specifically, we approximate the Jacobian-vector product $J_t w$ via finite differences,
    $J_t w \approx \left[F_t(x_t + \epsilon w) - F_t(x_t)\right]/\epsilon$,
which requires only two forward network evaluations. For the transpose product $J_t^\top (\cdot)$, we compute an exact vector-Jacobian product using reverse-mode automatic differentiation \citep{baydin2018automatic}. When computing the leading $k$-dimensional subspace ($k > 1$), we employ block power iteration over these alternating products. Once converged, the resulting singular vectors form an orthonormal basis $Q \in \mathbb{R}^{d \times k}$. We then sample a random unit vector $r \sim \mathbb{S}^{k-1}$, construct the perturbation $\delta = \varepsilon Q r$, and resume the integration trajectory from the perturbed state $x_t + \delta$.


\textbf{One-Iteration Approximation of Block Power Iteration.}
Computing the full leading singular subspace requires repeated $J_t^\top J_t$ evaluations, which introduces significant computational overhead. To mitigate this cost, we consider two lower-budget approximations for determining the perturbation direction. In both variants, we draw a single isotropic probe $w$ and transform it into a candidate direction $\delta$. We then resume sampling from the perturbed state $x_t + s\delta$, where $s \in \{-1, +1\}$ is a Rademacher sign variable sampled with equal probability. 


\noindent\textbf{\textcolor{paperOrange}{Solution 1: $J_t^\top J_t$ Product.}}
We form $\delta=\varepsilon J_t^\top J_tw/\|J_t^\top J_tw\|_2$, which amounts to a single $J_t^\top J_t$ product iteration. If $w=\sum_i a_iv_i$, then $J_t^\top J_tw=\sum_i\sigma_i^2a_iv_i$ so that the factor $\sigma_i^2$ amplifies components along high gain right singular directions. 

\noindent\textbf{\textcolor{paperBlue}{Solution 2: Jacobian Vector Product (JVP).}}
We form $\delta=\varepsilon J_tw/\|J_tw\|_2$ by estimating $J_tw$
with finite differences. This solution is much cheaper as it only costs two forward evaluations to compute.

As shown in Table~\ref{tab:runtime}, a single JVP is highly efficient, requiring roughly twice the runtime of a standard forward pass. While evaluating $J_t^\top J_t$ doubles this cost, empirical evaluations (Appendix Table~\ref{tab:ek_gain_full}) demonstrate its substantially superior accuracy in approximating the top-$k$ singular subspace. Nonetheless, JVP consistently outperforms isotropic random velocity perturbations. In practice, $J_t^\top J_t$ delivers the most reliable performance, while JVP provides an effective, low-cost alternative.

\begin{table}[htbp]
\centering
\caption{Computational efficiency of the proposed methods ($J_t^\top J_t$ vs. ~JVP) with one matrix iteration. This is evaluated on
FLUX.1-schnell at $512\times512$ resolution on an A100.}
\label{tab:runtime}
\footnotesize
\setlength{\tabcolsep}{3.5pt}
\begin{tabular}{lcccc}
\toprule
Methods & Seconds & Time$\times$ & FLOPs$\times$ & Mem.\,(GB) \\
\midrule
One forward pass & 0.307 & 1.00 & 1.00 & 22.3 \\
\textcolor{paperBlue}{JVP}
& \textcolor{paperBlue}{0.617}
& \textcolor{paperBlue}{2.01}
& \textcolor{paperBlue}{2.00}
& \textcolor{paperBlue}{22.3} \\
\textcolor{paperOrange}{$J_t^\top J_t$}
& \textcolor{paperOrange}{1.302}
& \textcolor{paperOrange}{4.24}
& \textcolor{paperOrange}{5.04}
& \textcolor{paperOrange}{35.4} \\
\bottomrule
\end{tabular}
\end{table}

\section{Experiments}
\label{sec:experiments}

We evaluate JIVE's diversity gains in both its full and low-cost settings, test its combination with an existing method, and apply it to inversion-based generation. We apply our intervention at initial time $t=0$ in all experiments. All finite difference estimates use probe perturbations with norm 4 unless otherwise specified.

\subsection{Benchmark diversity}
\label{sec:parti-results}

We first evaluate whether JIVE increases diversity across prompts
and complements existing diversity guidance. We compare JIVE with OSCAR \citep{wu2025oscar}, which repels samples within a batch, and CADS \citep{sadat2024cads}, which adds noise to text conditioning
and gradually reduces the noise level during sampling.

\begin{table}[!b]
\centering
\captionsetup{skip=2pt}
\caption{\textbf{Diversity and plug-in composition on PartiPrompts.} We evaluate both \textcolor{paperBlue}{JIVE(JVP)} and \textcolor{paperOrange}{JIVE($J_t^\top J_t$)} on FLUX.1-schnell at four evaluation steps. We use prompts from PartiPrompts at 16 images per prompt. Reported values are mean$\pm$SE over prompts. The norms used are 12 on both JIVE variants when alone, and 4 when combined with CADS ($\tau_2=1.2$).}
\label{tab:parti-bands}
\setlength{\tabcolsep}{1.5pt}
\renewcommand{\arraystretch}{0.85}
\setlength{\aboverulesep}{1pt}
\setlength{\belowrulesep}{1pt}
\footnotesize
\begin{tabular*}{\linewidth}{@{\extracolsep{\fill}}lcccccc@{}}
\toprule
Method & F-Vendi$\uparrow$ & P-Vendi$\uparrow$ & Pairwise $L_2\uparrow$ & CLIP$\uparrow$ & CLIP-IQA$\uparrow$ & HPSv2$\uparrow$ \\
\midrule
\multicolumn{7}{l}{\textbf{Standard} ($n=504$)} \\
\vphantom{$(J_t^\top J_t)$}Unmodified sampler & \meanse{3.76}{0.10} & \meanse{5.00}{0.05} & \meanse{323.69}{2.80} & \meanse{30.25}{0.13} & \meanse{0.668}{0.003} & \meanse{28.68}{0.14} \\
\vphantom{$(J_t^\top J_t)$}OSCAR & \meanse{3.80}{0.10} & \meanse{4.98}{0.05} & \meanse{327.43}{2.81} & \meanse{30.27}{0.13} & \meanse{0.664}{0.003} & \meanse{28.66}{0.14} \\
\vphantom{$(J_t^\top J_t)$}\textcolor{paperBlue}{JIVE(JVP)} & \textcolor{paperBlue}{\meanse{4.73}{0.11}} & \textcolor{paperBlue}{\meanse{5.49}{0.03}} & \textcolor{paperBlue}{\meanse{455.83}{2.12}} & \textcolor{paperBlue}{\meanse{30.00}{0.13}} & \textcolor{paperBlue}{\meanse{0.608}{0.003}} & \textcolor{paperBlue}{\meanse{25.98}{0.14}} \\
\vphantom{$(J_t^\top J_t)$}\textcolor{paperOrange}{JIVE($J_t^\top J_t$)} & \textcolor{paperOrange}{\meanse{5.14}{0.11}} & \textcolor{paperOrange}{\textbf{\meanse{5.60}{0.02}}} & \textcolor{paperOrange}{\textbf{\meanse{458.20}{1.75}}} & \textcolor{paperOrange}{\meanse{29.83}{0.12}} & \textcolor{paperOrange}{\meanse{0.602}{0.003}} & \textcolor{paperOrange}{\meanse{25.62}{0.13}} \\
\vphantom{$(J_t^\top J_t)$}CADS ($\tau_2{=}1.2$) & \meanse{6.10}{0.16} & \meanse{5.24}{0.04} & \meanse{344.42}{2.14} & \meanse{28.96}{0.15} & \meanse{0.678}{0.003} & \meanse{27.70}{0.15} \\
\vphantom{$(J_t^\top J_t)$}\textcolor{paperBlue}{JIVE(JVP)+CADS} & \textcolor{paperBlue}{\meanse{6.31}{0.16}} & \textcolor{paperBlue}{\meanse{5.54}{0.04}} & \textcolor{paperBlue}{\meanse{391.89}{2.14}} & \textcolor{paperBlue}{\meanse{28.97}{0.15}} & \textcolor{paperBlue}{\meanse{0.661}{0.003}} & \textcolor{paperBlue}{\meanse{26.87}{0.15}} \\
\vphantom{$(J_t^\top J_t)$}\textcolor{paperOrange}{JIVE($J_t^\top J_t$)+CADS} & \textcolor{paperOrange}{\textbf{\meanse{6.37}{0.16}}} & \textcolor{paperOrange}{\meanse{5.55}{0.03}} & \textcolor{paperOrange}{\meanse{385.30}{1.91}} & \textcolor{paperOrange}{\meanse{28.91}{0.15}} & \textcolor{paperOrange}{\meanse{0.663}{0.003}} & \textcolor{paperOrange}{\meanse{26.92}{0.15}} \\
\midrule
\multicolumn{7}{l}{\textbf{Intermediate} ($n=518$)} \\
\vphantom{$(J_t^\top J_t)$}Unmodified sampler & \meanse{2.95}{0.06} & \meanse{4.65}{0.04} & \meanse{294.19}{2.64} & \meanse{32.88}{0.13} & \meanse{0.679}{0.003} & \meanse{30.07}{0.15} \\
\vphantom{$(J_t^\top J_t)$}OSCAR & \meanse{2.96}{0.06} & \meanse{4.63}{0.04} & \meanse{297.08}{2.67} & \meanse{32.93}{0.13} & \meanse{0.675}{0.003} & \meanse{30.10}{0.14} \\
\vphantom{$(J_t^\top J_t)$}\textcolor{paperBlue}{JIVE(JVP)} & \textcolor{paperBlue}{\meanse{3.61}{0.07}} & \textcolor{paperBlue}{\meanse{5.27}{0.03}} & \textcolor{paperBlue}{\meanse{412.30}{2.67}} & \textcolor{paperBlue}{\meanse{32.33}{0.13}} & \textcolor{paperBlue}{\meanse{0.631}{0.003}} & \textcolor{paperBlue}{\meanse{27.66}{0.16}} \\
\vphantom{$(J^\top J)$}\textcolor{paperOrange}{JIVE($J_t^\top J_t$)} & \textcolor{paperOrange}{\meanse{3.99}{0.07}} & \textcolor{paperOrange}{\textbf{\meanse{5.47}{0.03}}} & \textcolor{paperOrange}{\textbf{\meanse{430.17}{2.01}}} & \textcolor{paperOrange}{\meanse{32.09}{0.12}} & \textcolor{paperOrange}{\meanse{0.620}{0.004}} & \textcolor{paperOrange}{\meanse{27.28}{0.14}} \\
\vphantom{$(J_t^\top J_t)$}CADS ($\tau_2{=}1.2$) & \meanse{4.32}{0.08} & \meanse{5.11}{0.04} & \meanse{330.63}{2.31} & \meanse{31.72}{0.13} & \meanse{0.687}{0.003} & \meanse{29.29}{0.14} \\
\vphantom{$(J_t^\top J_t)$}\textcolor{paperBlue}{JIVE(JVP)+CADS} & \textcolor{paperBlue}{\meanse{4.42}{0.08}} & \textcolor{paperBlue}{\meanse{5.33}{0.04}} & \textcolor{paperBlue}{\meanse{365.81}{2.50}} & \textcolor{paperBlue}{\meanse{31.64}{0.13}} & \textcolor{paperBlue}{\meanse{0.674}{0.003}} & \textcolor{paperBlue}{\meanse{28.64}{0.14}} \\
\vphantom{$(J^\top J)$}\textcolor{paperOrange}{JIVE($J_t^\top J_t$)+CADS} & \textcolor{paperOrange}{\textbf{\meanse{4.46}{0.08}}} & \textcolor{paperOrange}{\meanse{5.41}{0.04}} & \textcolor{paperOrange}{\meanse{369.10}{2.31}} & \textcolor{paperOrange}{\meanse{31.56}{0.13}} & \textcolor{paperOrange}{\meanse{0.676}{0.003}} & \textcolor{paperOrange}{\meanse{28.72}{0.14}} \\
\midrule
\multicolumn{7}{l}{\textbf{Challenging} ($n=610$)} \\
\vphantom{$(J_t^\top J_t)$}Unmodified sampler & \meanse{3.03}{0.06} & \meanse{4.74}{0.04} & \meanse{311.35}{2.51} & \meanse{33.22}{0.13} & \meanse{0.665}{0.004} & \meanse{30.14}{0.14} \\
\vphantom{$(J_t^\top J_t)$}OSCAR & \meanse{3.06}{0.06} & \meanse{4.72}{0.04} & \meanse{314.71}{2.53} & \meanse{33.27}{0.13} & \meanse{0.660}{0.004} & \meanse{30.15}{0.14} \\
\vphantom{$(J_t^\top J_t)$}\textcolor{paperBlue}{JIVE(JVP)} & \textcolor{paperBlue}{\meanse{3.78}{0.07}} & \textcolor{paperBlue}{\meanse{5.35}{0.03}} & \textcolor{paperBlue}{\meanse{427.71}{2.47}} & \textcolor{paperBlue}{\meanse{32.74}{0.13}} & \textcolor{paperBlue}{\meanse{0.622}{0.004}} & \textcolor{paperBlue}{\meanse{27.90}{0.15}} \\
\vphantom{$(J_t^\top J_t)$}\textcolor{paperOrange}{JIVE($J_t^\top J_t$)} & \textcolor{paperOrange}{\meanse{4.14}{0.07}} & \textcolor{paperOrange}{\textbf{\meanse{5.52}{0.03}}} & \textcolor{paperOrange}{\textbf{\meanse{437.78}{1.98}}} & \textcolor{paperOrange}{\meanse{32.59}{0.13}} & \textcolor{paperOrange}{\meanse{0.614}{0.004}} & \textcolor{paperOrange}{\meanse{27.55}{0.14}} \\
\vphantom{$(J_t^\top J_t)$}CADS ($\tau_2{=}1.2$) & \meanse{4.88}{0.09} & \meanse{5.23}{0.04} & \meanse{339.85}{2.17} & \meanse{31.66}{0.13} & \meanse{0.673}{0.003} & \meanse{28.79}{0.14} \\
\vphantom{$(J_t^\top J_T)$}\textcolor{paperBlue}{JIVE(JVP)+CADS} & \textcolor{paperBlue}{\meanse{4.92}{0.09}} & \textcolor{paperBlue}{\meanse{5.44}{0.03}} & \textcolor{paperBlue}{\meanse{380.16}{2.23}} & \textcolor{paperBlue}{\meanse{31.64}{0.12}} & \textcolor{paperBlue}{\meanse{0.659}{0.003}} & \textcolor{paperBlue}{\meanse{28.13}{0.14}} \\
\vphantom{$(J_t^\top J_t)$}\textcolor{paperOrange}{JIVE($J_t^\top J_t$)+CADS} & \textcolor{paperOrange}{\textbf{\meanse{5.01}{0.09}}} & \textcolor{paperOrange}{\meanse{5.52}{0.03}} & \textcolor{paperOrange}{\meanse{378.32}{2.01}} & \textcolor{paperOrange}{\meanse{31.62}{0.12}} & \textcolor{paperOrange}{\meanse{0.663}{0.003}} & \textcolor{paperOrange}{\meanse{28.21}{0.14}} \\
\midrule
\multicolumn{7}{l}{\textbf{All} ($n=1632$)} \\
\vphantom{$(J^\top J)$}Unmodified sampler & \meanse{3.23}{0.04} & \meanse{4.79}{0.03} & \meanse{309.72}{1.55} & \meanse{32.19}{0.08} & \meanse{0.671}{0.002} & \meanse{29.66}{0.08} \\
\vphantom{$(J^\top J)$}OSCAR & \meanse{3.26}{0.04} & \meanse{4.77}{0.03} & \meanse{313.05}{1.56} & \meanse{32.23}{0.08} & \meanse{0.666}{0.002} & \meanse{29.67}{0.08} \\
\vphantom{$(J^\top J)$}\textcolor{paperBlue}{JIVE(JVP)} & \textcolor{paperBlue}{\meanse{4.02}{0.05}} & \textcolor{paperBlue}{\meanse{5.37}{0.02}} & \textcolor{paperBlue}{\meanse{431.50}{1.48}} & \textcolor{paperBlue}{\meanse{31.76}{0.08}} & \textcolor{paperBlue}{\meanse{0.621}{0.002}} & \textcolor{paperBlue}{\meanse{27.23}{0.09}} \\
\vphantom{$(J^\top J)$}\textcolor{paperOrange}{JIVE($J_t^\top J_t$)} & \textcolor{paperOrange}{\meanse{4.40}{0.05}} & \textcolor{paperOrange}{\textbf{\meanse{5.53}{0.01}}} & \textcolor{paperOrange}{\textbf{\meanse{441.67}{1.15}}} & \textcolor{paperOrange}{\meanse{31.58}{0.08}} & \textcolor{paperOrange}{\meanse{0.612}{0.002}} & \textcolor{paperOrange}{\meanse{26.87}{0.08}} \\
\vphantom{$(J^\top J)$}CADS ($\tau_2{=}1.2$) & \meanse{5.08}{0.07} & \meanse{5.20}{0.02} & \meanse{338.34}{1.28} & \meanse{30.84}{0.08} & \meanse{0.679}{0.002} & \meanse{28.61}{0.09} \\
\vphantom{$(J^\top J)$}\textcolor{paperBlue}{JIVE(JVP)+CADS} & \textcolor{paperBlue}{\meanse{5.19}{0.07}} & \textcolor{paperBlue}{\meanse{5.44}{0.02}} & \textcolor{paperBlue}{\meanse{379.23}{1.35}} & \textcolor{paperBlue}{\meanse{30.82}{0.08}} & \textcolor{paperBlue}{\meanse{0.664}{0.002}} & \textcolor{paperBlue}{\meanse{27.90}{0.09}} \\
\vphantom{$(J^\top J)$}\textcolor{paperOrange}{JIVE($J_t^\top J_t$)+CADS} & \textcolor{paperOrange}{\textbf{\meanse{5.26}{0.07}}} & \textcolor{paperOrange}{\meanse{5.49}{0.02}} & \textcolor{paperOrange}{\meanse{377.55}{1.22}} & \textcolor{paperOrange}{\meanse{30.76}{0.08}} & \textcolor{paperOrange}{\meanse{0.667}{0.002}} & \textcolor{paperOrange}{\meanse{27.97}{0.08}} \\
\bottomrule
\end{tabular*}
\end{table}

\noindent \textbf{Metrics.}
We measure diversity using the Vendi score \citep{friedman2023the}
in both DINOv2 \citep{oquab2024dinov} feature space (F-Vendi) and pixel space (P-Vendi), together
with mean pairwise $L_2$ distance in pixel space. We evaluate prompt
alignment with CLIPScore \citep{hessel2021clipscore}, image quality
with CLIP-IQA \citep{wang2022exploringclipassessinglook}, and human
preference alignment with HPSv2 \citep{wu2023human}.

\noindent \textbf{Setup.}
We generate 16 images for each of the 1{,}632 PartiPrompts prompts
\citep{yu2022scaling} using four-step FLUX.1-schnell
\citep{blackforestlabs2024flux}, with matched initial latents across
methods. Both \textcolor{paperBlue}{JIVE(JVP)} and \textcolor{paperOrange}{JIVE($J_t^\top J_t$)} estimate a four-dimensional subspace
using 10 iterations,
and apply a single perturbation of norm 12 before sampling.
Table~\ref{tab:parti-bands} reports the effectiveness of our method across different prompt difficulties, along with their aggregate score.

\noindent \textbf{Observation 1: JIVE improves diversity over OSCAR.} Both JIVE variants improve F-Vendi, P-Vendi, and pairwise $L_2$ across every prompt difficulty level, with modestly lower text alignment and a larger cost in image quality and preference. Relative to CADS, standalone JIVE yields greater pixel-space diversity but lower Feature-space Vendi, motivating their combination which we explore in the JIVE+CADS rows.
\par

\noindent \textbf{Observation 2: JIVE adds diversity to CADS.} Adding JIVE to CADS with a smaller perturbation norm of 4 improves all three diversity measures over CADS in every band, with small decreases in quality and alignment. This shows that JIVE composes effectively with existing diversity guidance and can serve as a flexible plugin method.
\par

\subsection{Low cost direction estimation}
\label{sec:low_cost}

One-step generation has become increasingly important for practical generative models \citep{geng2026mean,zhang2026stable,geng2026improved}. However, this efficiency often comes at the cost of diversity \citep{zhang2026teacherfeaturedriftingonestepdiffusion}.  We therefore evaluate JIVE on one-step FLUX.1-schnell using a single estimated direction and a single estimation iteration. We also test this minimal
budget variant on four-step SD3.5-Large-Turbo
\citep{sauer2024fast,esser2024scaling}, as an additional model family for comparison. Figure~\ref{fig:geneval-frontiers} compares the diversity-fidelity tradeoff curves of both JIVE variants, CADS, and OSCAR, and Table \ref{tab:method-comparison} reports representative norms from these curves. Both JVP and $J_t^\top J_t$ variants use a finite difference for the Jacobian-vector product. For SD3.5-Large-Turbo, we use a finite difference probe norm of 10 due to the higher latent space dimension.

We evaluate on 553 GenEval prompts \citep{ghosh2023geneval}, generating
four images per prompt. We
report additional fidelity metrics which are GenEval score for compositional prompt adherence, and PickScore
\citep{kirstain2023pickapic} for human preference alignment.
Appendix~\ref{app:direction-ablations} provides the corresponding
tradeoff curves for GenEval and PickScore.

\begin{figure}[!b]
\centering
\includegraphics[width=\linewidth]{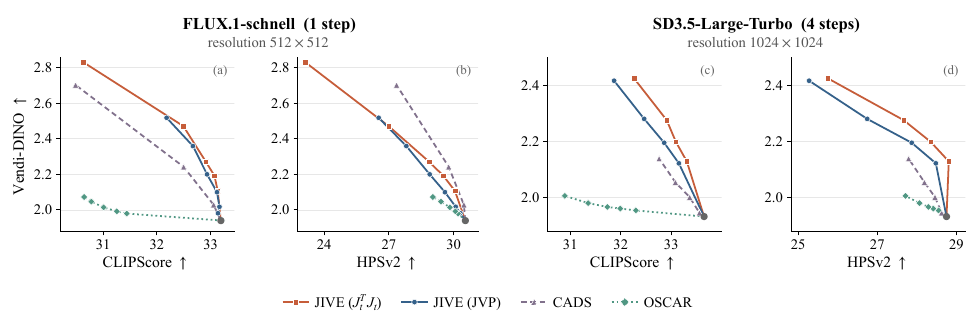}
\caption{\textbf{Diversity-fidelity tradeoffs on GenEval.} We show tradeoff curves for CLIPScore and HPSv2 versus Vendi-DINO, averaged over all 553 GenEval prompts at 4 images per prompt. Additional tradeoff curves for PickScore and GenEval score are in Appendix Figure \ref{fig:geneval-appendix-frontiers}.}
\label{fig:geneval-frontiers}
\end{figure}

\begin{table}[h]
\centering
\caption{Diversity-fidelity comparison on GenEval
(553 prompts, 4 images per prompt).
Both JIVE variants use one direction-estimation iteration with finite differences. JVP/$J_t^\top J_t$ use norms 26/8 on FLUX and 26/16 on Turbo, respectively.  Bold indicates the highest diversity score within each model.}
\label{tab:method-comparison}
\small
\setlength{\tabcolsep}{5pt}
\newcommand{\partise}[2]{#1{\fontsize{6}{7}\selectfont$\pm$#2}}
\newcommand{\jtablecell}[1]{\textcolor{paperBlue}{#1}}
\newcommand{\jtjtablecell}[1]{\textcolor{paperOrange}{#1}}
\begin{tabular}{lrrrrrc}
\toprule
Method & F-Vendi $\uparrow$ & P-Vendi $\uparrow$ & CLIP $\uparrow$ & HPSv2 $\uparrow$ & PickScore $\uparrow$ & GenEval $\uparrow$ \\
\midrule
\multicolumn{6}{l}{\textbf{FLUX.1-schnell}
\quad (1 step, $512\times512$)} \\
\midrule
Unmodified sampler
& \partise{1.940}{0.019} & \partise{1.683}{0.013} & \partise{33.189}{0.138} & \partise{30.540}{0.109} & \partise{23.330}{0.044} & 0.700 \\
CADS ($\tau_2{=}1.2$)
& \partise{2.243}{0.024} & \partise{1.779}{0.013} & \partise{32.497}{0.134} & \partise{29.770}{0.120} & \partise{23.026}{0.043} & 0.583 \\
OSCAR
& \partise{2.073}{0.021} & \partise{1.680}{0.013} & \partise{30.648}{0.113} & \partise{29.037}{0.118} & \partise{22.776}{0.045} & 0.667 \\
\jtablecell{\textbf{Ours} (JVP, FD)}
& \jtablecell{\partise{2.200}{0.021}} & \jtablecell{\partise{1.964}{0.018}} & \jtablecell{\partise{32.932}{0.132}} & \jtablecell{\partise{28.890}{0.119}} & \jtablecell{\partise{22.945}{0.044}} & \jtablecell{0.684} \\
\jtjtablecell{\textbf{Ours} ($J_t^\top J_t$, FD)}
& \jtjtablecell{\textbf{\partise{2.272}{0.023}}} & \jtjtablecell{\textbf{\partise{2.080}{0.016}}} & \jtjtablecell{\partise{32.909}{0.132}} & \jtjtablecell{\partise{28.858}{0.115}} & \jtjtablecell{\partise{22.910}{0.043}} & \jtjtablecell{0.672} \\
\midrule
\multicolumn{6}{l}{\textbf{SD3.5-Large-Turbo}
\quad (4 steps, $1024\times1024$)} \\
\midrule
Unmodified sampler
& \partise{1.932}{0.021} & \partise{1.369}{0.007} & \partise{33.650}{0.127} & \partise{28.751}{0.116} & \partise{23.112}{0.043} & 0.708 \\
CADS ($\tau_2{=}1.1$)
& \partise{2.139}{0.025} & \partise{1.370}{0.007} & \partise{32.760}{0.114} & \partise{27.793}{0.119} & \partise{22.824}{0.041} & 0.582 \\
OSCAR
& \partise{2.006}{0.022} & \partise{1.367}{0.008} & \partise{30.892}{0.111} & \partise{27.713}{0.121} & \partise{22.783}{0.042} & 0.701 \\
\jtablecell{\textbf{Ours} (JVP, FD)}
& \jtablecell{\partise{2.195}{0.023}} & \jtablecell{\partise{1.964}{0.015}} & \jtablecell{\partise{32.860}{0.125}} & \jtablecell{\partise{27.871}{0.119}} & \jtablecell{\partise{22.677}{0.043}} & \jtablecell{0.657} \\
\jtjtablecell{\textbf{Ours} ($J_t^\top J_t$, FD)}
& \jtjtablecell{\textbf{\partise{2.276}{0.023}}} & \jtjtablecell{\textbf{\partise{1.977}{0.013}}} & \jtjtablecell{\partise{32.910}{0.126}} & \jtjtablecell{\partise{27.667}{0.121}} & \jtjtablecell{\partise{22.638}{0.041}} & \jtjtablecell{0.674} \\
\bottomrule
\end{tabular}
\end{table}

\noindent\textbf{Observation 3: $J_t^\top J_t$ gives a stronger tradeoff than JVP.}
Across both models, $J_t^\top J_t$ generally achieves higher Vendi-DINO at comparable CLIPScore or HPSv2. This is consistent with $J_t^\top J_t$ placing more perturbation energy in high-gain right singular directions, thereby producing greater endpoint diversity for a similar quality cost. The JVP variant remains a useful cheaper alternative for finding a perturbation direction.

\noindent \textbf{Observation 4: JIVE remains effective in one-step generation.} On one-step FLUX.1-schnell, JIVE($J_t^\top J_t$) achieves a more favorable diversity-CLIPScore and GenEval tradeoff than CADS. CADS retains a stronger HPSv2 tradeoff and remains competitive under PickScore, showing that the relative advantage depends on the quality metric. Both JIVE variants offer stronger diversity-quality tradeoffs than OSCAR over the evaluated settings.

\noindent \textbf{Observation 5: JIVE transfers effectively to SD3.5-Large-Turbo.}
On four-step SD3.5-Large-Turbo, JIVE($J_t^\top J_t$) exhibits the strongest observed diversity-quality tradeoff across CLIPScore, HPSv2, PickScore, and GenEval. The JVP variant also generally improves on CADS and OSCAR while requiring less computation for direction estimation. Compared with FLUX, the advantage over CADS is more consistent across quality metrics, supporting the effectiveness of JIVE's minimal estimation budget on a second generator.

\begin{figure}[!b]
    \centering
    \input{figures/reconstruction_grid}
    \caption{\textbf{Diverse generation from inverted latents.} Controlled ODE inversion maps a stroke image to a latent. Denoising this latent returns a single reconstruction, while applying a JIVE($J^{\top}J$) perturbation after the first denoising step yields distinct outputs from the same latent.}
    \label{fig:rf_qualitative}
\end{figure}

\subsection{Improving diversity in inversion-based generation}
\label{sec:rf_inversion}

Many image-generation tasks start from an existing image rather than generating from scratch. The goal is to produce new images that remain related to the source while allowing meaningful variations. Diversity is important in this setting because a single source can admit many plausible outputs, and generating multiple alternatives gives users or downstream systems more choices. We consider inversion-based generation \citep{meng2022sdedit,mokady2022nulltextinversioneditingreal,luzi2024boomerang}, where the starting latent is produced by inverting a source image rather than drawing from Gaussian noise. We study this setting by combining JIVE with RF-Inversion \citep{rout2024semantic} on stroke-to-image tasks. Given a paint-stroke image, RF-Inversion inverts a controlled ODE to recover a starting latent, then integrates forward to generate an image. During denoising, we inject JIVE($J_t^\top J_t$) after the first reverse-step controller update ($\eta_t=0$). On FLUX.1-dev, we use 32 source images for paint strokes and generate 16 outputs per source for various different scenes from the LSUN dataset \citep{yu2015lsun}. Table~\ref{tab:stroke2image-rf-inverse-aftereta-lsun4} reports diversity and fidelity and Figure~\ref{fig:rf_qualitative} illustrates the workflow.


\begin{table}[!htb]
\centering
\setlength{\tabcolsep}{3.5pt}
\caption{\textbf{Diverse edits from inverted latents.} RF-Inversion on FLUX.1-dev (30 steps), with JIVE($J_t^\top J_t$) injected after the first reverse-step controller update ($\eta_t=0$, norm 8). F-Vendi uses DINOv2-base features while P-Vendi uses pixels. Source $L_2$ is measured relative to the source stroke image. Values are mean$\pm$SE over 32 sources. Orange denotes JIVE.}
\label{tab:stroke2image-rf-inverse-aftereta-lsun4}
\resizebox{\textwidth}{!}{%
\definecolor{paperOrange}{RGB}{204,85,0}
\newcommand{\jiveorcell}[1]{\textcolor{paperOrange}{#1}}
\begin{tabular}{llcccccccc}
\toprule
Dataset & Method & Controller & Norm & F-Vendi$\uparrow$ & P-Vendi$\uparrow$
 & LPIPS$_d$$\uparrow$ & Source $L_2\downarrow$ & DINO$_f$$\uparrow$
 & KID$\times$1k$\downarrow$ \\
\midrule
classroom & baseline & $\eta\!=\!0.5$ & \text{N/A} & \meanse{4.67}{0.17} & \meanse{2.82}{0.06} & \meanse{0.684}{0.002} & \meanse{121.68}{1.34} & \meanse{0.051}{0.002} & 53.95 \\
 & \jiveorcell{JIVE($J_t^{\top}J_t$)} & \jiveorcell{$\eta\!=\!0.5$} & \jiveorcell{8} & \jiveorcell{\textbf{\meanse{5.37}{0.16}}} & \jiveorcell{\textbf{\meanse{2.98}{0.06}}} & \jiveorcell{\textbf{\meanse{0.716}{0.002}}} & \jiveorcell{\meanse{126.12}{1.34}} & \jiveorcell{\meanse{0.057}{0.002}} & \jiveorcell{54.90} \\
\midrule
kitchen & baseline & $\eta\!=\!0.5$ & \text{N/A} & \meanse{6.78}{0.18} & \meanse{2.33}{0.07} & \meanse{0.680}{0.003} & \meanse{114.09}{0.96} & \meanse{0.018}{0.003} & 40.62 \\
 & \jiveorcell{JIVE($J_t^{\top}J_t$)} & \jiveorcell{$\eta\!=\!0.5$} & \jiveorcell{8} & \jiveorcell{\textbf{\meanse{7.40}{0.16}}} & \jiveorcell{\textbf{\meanse{2.46}{0.07}}} & \jiveorcell{\textbf{\meanse{0.723}{0.002}}} & \jiveorcell{\meanse{121.47}{1.02}} & \jiveorcell{\meanse{0.023}{0.003}} & \jiveorcell{39.99} \\
\midrule
conference room & baseline & $\eta\!=\!0.5$ & \text{N/A} & \meanse{4.99}{0.16} & \meanse{2.69}{0.11} & \meanse{0.654}{0.002} & \meanse{119.87}{1.60} & \meanse{0.085}{0.004} & 35.12 \\
 & \jiveorcell{JIVE($J_t^{\top}J_t$)} & \jiveorcell{$\eta\!=\!0.5$} & \jiveorcell{8} & \jiveorcell{\textbf{\meanse{5.80}{0.18}}} & \jiveorcell{\textbf{\meanse{2.88}{0.12}}} & \jiveorcell{\textbf{\meanse{0.696}{0.003}}} & \jiveorcell{\meanse{126.15}{1.67}} & \jiveorcell{\meanse{0.086}{0.003}} & \jiveorcell{25.50} \\
\midrule
dining room & baseline & $\eta\!=\!0.5$ & \text{N/A} & \meanse{6.92}{0.14} & \meanse{2.62}{0.09} & \meanse{0.652}{0.003} & \meanse{118.00}{1.15} & \meanse{0.065}{0.003} & 11.49 \\
 & \jiveorcell{JIVE($J_t^{\top}J_t$)} & \jiveorcell{$\eta\!=\!0.5$} & \jiveorcell{8} & \jiveorcell{\textbf{\meanse{7.93}{0.12}}} & \jiveorcell{\textbf{\meanse{2.80}{0.09}}} & \jiveorcell{\textbf{\meanse{0.700}{0.003}}} & \jiveorcell{\meanse{124.32}{1.16}} & \jiveorcell{\meanse{0.066}{0.003}} & \jiveorcell{9.52} \\
\midrule
restaurant & baseline & $\eta\!=\!0.5$ & \text{N/A} & \meanse{6.78}{0.25} & \meanse{4.01}{0.15} & \meanse{0.637}{0.004} & \meanse{120.33}{1.91} & \meanse{0.048}{0.004} & 44.59 \\
 & \jiveorcell{JIVE($J_t^{\top}J_t$)} & \jiveorcell{$\eta\!=\!0.5$} & \jiveorcell{8} & \jiveorcell{\textbf{\meanse{7.48}{0.24}}} & \jiveorcell{\textbf{\meanse{4.15}{0.14}}} & \jiveorcell{\textbf{\meanse{0.681}{0.003}}} & \jiveorcell{\meanse{124.36}{2.06}} & \jiveorcell{\meanse{0.047}{0.004}} & \jiveorcell{39.23} \\
\bottomrule
\end{tabular}%
}
\end{table}

\noindent\textbf{Observation 6: diversity gains after inversion often accompany improved quality.}
JIVE($J_t^\top J_t$) improves F-Vendi, P-Vendi, and LPIPS \citep{zhang2018unreasonable} diversity across all five datasets. KID generally improves on these datasets. These results suggest that JIVE explores meaningful alternative reconstructions rather than introducing arbitrary distortion. The outputs move farther from the source and become more diverse while generally preserving or improving quality.

\section{Related work}
\label{sec:related-work}

\noindent \textbf{Inference-time diversity.}
OSCAR, DiverseFlow, and Particle Guidance promote diversity through
repulsive interactions among sampling trajectories
\citep{wu2025oscar,morshed2025diverseflow,corso2024particle}.
Their diversity updates depend on other samples, whereas JIVE selects
each perturbation independently using the generator's Jacobian.
Noise optimization \citep{harrington2026s} iteratively optimizes
initial latents using output-diversity
objectives and quality rewards. It supports both joint
batch optimization and sequential generation, where each new output
is optimized relative to previous outputs. JIVE's cheapest variant
constructs a direction with a single Jacobian product and requires
no auxiliary feature or reward model.

CADS \citep{sadat2024cads} adds noise to text conditioning and gradually
anneals it away during sampling. JIVE instead perturbs the sampling
latent and can be combined with CADS, as demonstrated in our experiments.
STRIDE \citep{yadav2026stride} uses activation PCA to perturb
intermediate transformer features at architecture-specific injection
layers. JIVE selects latent perturbations through the
endpoint-predictor Jacobian without modifying internal features or
selecting injection layers. 
LOCO Edit \citep{chen2024exploring} uses leading Jacobian singular directions for localized image editing. By contrast, our goal is to use the Jacobian to find diversity-inducing velocity perturbations.



\section{Conclusion}
\label{sec:conclusion}

JIVE uses the generator's Jacobian to identify velocity perturbations that alter the trajectory's endpoint. Our analysis connects our local perturbation criterion to
greater expected spread among independently generated outputs. 
Across the evaluated generators, JIVE improves diversity and remains
effective in low cost settings. Its application to inversion-based generation
further demonstrates its usefulness in applications outside of text-to-image generation.

\noindent \textbf{Limitations and future work}
Although we provide a theoretical bound on the perturbation norm for latent distributions and test its effectiveness in our experiment, a general norm-selection principle for latents remains open. Developing such a principle is an important direction for future work. Generation diversity also has applications in robotics, particularly generating diverse action sequences and candidate motion plans for downstream selection, and in other domains such as data augmentation.

\FloatBarrier
\bibliography{iclr2027_conference}

@article{ho2020denoising,
  title={Denoising diffusion probabilistic models},
  author={Ho, Jonathan and Jain, Ajay and Abbeel, Pieter},
  journal={Advances in neural information processing systems},
  volume={33},
  pages={6840--6851},
  year={2020}
}

@inproceedings{
song2020score,
title={Score-Based Generative Modeling through Stochastic Differential Equations},
author={Yang Song and Jascha Sohl-Dickstein and Diederik P Kingma and Abhishek Kumar and Stefano Ermon and Ben Poole},
booktitle={International Conference on Learning Representations},
year={2021},
url={https://openreview.net/forum?id=PxTIG12RRHS}
}

@inproceedings{
lipman2023flow,
title={Flow Matching for Generative Modeling},
author={Yaron Lipman and Ricky T. Q. Chen and Heli Ben-Hamu and Maximilian Nickel and Matthew Le},
booktitle={The Eleventh International Conference on Learning Representations },
year={2023},
url={https://openreview.net/forum?id=PqvMRDCJT9t}
}

@inproceedings{
liu2023flow,
title={Flow Straight and Fast: Learning to Generate and Transfer Data with Rectified Flow},
author={Xingchao Liu and Chengyue Gong and Qiang Liu},
booktitle={The Eleventh International Conference on Learning Representations },
year={2023},
url={https://openreview.net/forum?id=XVjTT1nw5z}
}

@inproceedings{esser2024scaling,
  title={Scaling rectified flow transformers for high-resolution image synthesis},
  author={Esser, Patrick and Kulal, Sumith and Blattmann, Andreas and Entezari, Rahim and M{\"u}ller, Jonas and Saini, Harry and Levi, Yam and Lorenz, Dominik and Sauer, Axel and Boesel, Frederic and Podell, Dustin and Dockhorn, Tim and English, Zion and Rombach, Robin},
  booktitle={Forty-first international conference on machine learning},
  year={2024}
}

@misc{blackforestlabs2024flux,
title={Flux},
author={{Black Forest Labs}},
year={2024},
howpublished = {\url{https://github.com/black-forest-labs/flux}}
}

@inproceedings{sauer2024fast,
  title={Fast high-resolution image synthesis with latent adversarial diffusion distillation},
  author={Sauer, Axel and Boesel, Frederic and Dockhorn, Tim and Blattmann, Andreas and Esser, Patrick and Rombach, Robin},
  booktitle={SIGGRAPH Asia 2024 Conference Papers},
  pages={1--11},
  year={2024}
}

@inproceedings{
wu2025oscar,
title={Letting Trajectories Spread: Quality-Preserving Control for Diverse Flow Matching},
author={Jingxuan Wu and Zhenglin Wan and Xingrui Yu and Yuzhe Yang and Bo An and Ivor Tsang and Yang You},
booktitle={Forty-third International Conference on Machine Learning},
year={2026},
url={https://openreview.net/forum?id=LtHNTRg73G}
}

@article{yadav2026stride,
  title={STRIDE: Training-Free Diversity Guidance via PCA-Directed Feature Perturbation in Single-Step Diffusion Models},
  author={Yadav, Ankit and Garg, Arpit and Huy, Ta Duc and Liu, Lingqiao},
  journal={arXiv preprint arXiv:2605.11494},
  year={2026}
}

@inproceedings{
sadat2024cads,
title={{CADS}: Unleashing the Diversity of Diffusion Models through Condition-Annealed Sampling},
author={Seyedmorteza Sadat and Jakob Buhmann and Derek Bradley and Otmar Hilliges and Romann M. Weber},
booktitle={The Twelfth International Conference on Learning Representations},
year={2024},
url={https://openreview.net/forum?id=zMoNrajk2X}
}

@inproceedings{morshed2025diverseflow,
  title={Diverseflow: Sample-efficient diverse mode coverage in flows},
  author={Morshed, Mashrur M and Boddeti, Vishnu},
  booktitle={2025 IEEE/CVF Conference on Computer Vision and Pattern Recognition (CVPR)},
  pages={23303--23312},
  year={2025},
  organization={IEEE}
}

@inproceedings{
corso2024particle,
title={Particle Guidance: non-I.I.D. Diverse Sampling with Diffusion Models},
author={Gabriele Corso and Yilun Xu and Valentin De Bortoli and Regina Barzilay and Tommi S. Jaakkola},
booktitle={The Twelfth International Conference on Learning Representations},
year={2024},
url={https://openreview.net/forum?id=KqbCvIFBY7}
}

@inproceedings{gandikota2025distillingdiversitycontroldiffusion,
  title={Distilling diversity and control in diffusion models},
  author={Gandikota, Rohit and Bau, David},
  booktitle={2026 IEEE/CVF Winter Conference on Applications of Computer Vision (WACV)},
  pages={1304--1313},
  year={2026},
  organization={IEEE}
}

@inproceedings{
rout2024semantic,
title={Semantic Image Inversion and Editing using Rectified Stochastic Differential Equations},
author={Litu Rout and Yujia Chen and Nataniel Ruiz and Constantine Caramanis and Sanjay Shakkottai and Wen-Sheng Chu},
booktitle={The Thirteenth International Conference on Learning Representations},
year={2025},
url={https://openreview.net/forum?id=Hu0FSOSEyS}
}

@article{
friedman2023the,
title={The Vendi Score: A Diversity Evaluation Metric for Machine Learning},
author={Dan Friedman and Adji Bousso Dieng},
journal={Transactions on Machine Learning Research},
issn={2835-8856},
year={2023},
url={https://openreview.net/forum?id=g97OHbQyk1},
note={}
}

@article{
yu2022scaling,
title={Scaling Autoregressive Models for Content-Rich Text-to-Image Generation},
author={Jiahui Yu and Yuanzhong Xu and Jing Yu Koh and Thang Luong and Gunjan Baid and Zirui Wang and Vijay Vasudevan and Alexander Ku and Yinfei Yang and Burcu Karagol Ayan and Ben Hutchinson and Wei Han and Zarana Parekh and Xin Li and Han Zhang and Jason Baldridge and Yonghui Wu},
journal={Transactions on Machine Learning Research},
issn={2835-8856},
year={2022},
url={https://openreview.net/forum?id=AFDcYJKhND},
note={Featured Certification}
}

@inproceedings{hessel2021clipscore,
  title={Clipscore: A reference-free evaluation metric for image captioning},
  author={Hessel, Jack and Holtzman, Ari and Forbes, Maxwell and Le Bras, Ronan and Choi, Yejin},
  booktitle={Proceedings of the 2021 conference on empirical methods in natural language processing},
  pages={7514--7528},
  year={2021}
}

@inproceedings{wang2022exploringclipassessinglook,
  title={Exploring clip for assessing the look and feel of images},
  author={Wang, Jianyi and Chan, Kelvin CK and Loy, Chen Change},
  booktitle={Proceedings of the AAAI conference on artificial intelligence},
  volume={37},
  number={2},
  pages={2555--2563},
  year={2023}
}

@article{wu2023human,
  title={Human preference score v2: A solid benchmark for evaluating human preferences of text-to-image synthesis},
  author={Wu, Xiaoshi and Hao, Yiming and Sun, Keqiang and Chen, Yixiong and Zhu, Feng and Zhao, Rui and Li, Hongsheng},
  journal={arXiv preprint arXiv:2306.09341},
  year={2023}
}

@article{kirstain2023pickapic,
  title={Pick-a-pic: An open dataset of user preferences for text-to-image generation},
  author={Kirstain, Yuval and Polyak, Adam and Singer, Uriel and Matiana, Shahbuland and Penna, Joe and Levy, Omer},
  journal={Advances in neural information processing systems},
  volume={36},
  pages={36652--36663},
  year={2023}
}

@article{ghosh2023geneval,
  title={Geneval: An object-focused framework for evaluating text-to-image alignment},
  author={Ghosh, Dhruba and Hajishirzi, Hannaneh and Schmidt, Ludwig},
  journal={Advances in Neural Information Processing Systems},
  volume={36},
  pages={52132--52152},
  year={2023}
}

@inproceedings{harrington2026s,
  title={It's Never Too Late: Noise Optimization for Collapse Recovery in Trained Diffusion Models},
  author={Harrington, Anne and Koepke, A. Sophia and Karthik, Shyamgopal and Darrell, Trevor and Efros, Alexei A},
  booktitle={Proceedings of the IEEE/CVF Conference on Computer Vision and Pattern Recognition},
  pages={43124--43134},
  year={2026}
}

@inproceedings{zhang2026stable,
  title={Stable Mean Flow: Lyapunov-Inspired One-Step Flow Matching},
  author={Zhang, Guangxun and Haberle, Mason and Geiger, Davi},
  booktitle={Proceedings of the IEEE/CVF Conference on Computer Vision and Pattern Recognition},
  pages={9223--9232},
  year={2026}
}

@inproceedings{
geng2026mean,
title={Mean Flows for One-step Generative Modeling},
author={Zhengyang Geng and Mingyang Deng and Xingjian Bai and J Zico Kolter and Kaiming He},
booktitle={The Thirty-ninth Annual Conference on Neural Information Processing Systems},
year={2025},
url={https://openreview.net/forum?id=uWj4s7rMnR}
}

@inproceedings{geng2026improved,
  title={Improved mean flows: On the challenges of fastforward generative models},
  author={Geng, Zhengyang and Lu, Yiyang and Wu, Zongze and Shechtman, Eli and Kolter, J Zico and He, Kaiming},
  booktitle={Proceedings of the IEEE/CVF Conference on Computer Vision and Pattern Recognition},
  pages={30467--30476},
  year={2026}
}

@misc{zhang2026teacherfeaturedriftingonestepdiffusion,
      title={Teacher-Feature Drifting: One-Step Diffusion Distillation with Pretrained Diffusion Representations}, 
      author={Yuan Zhang and Chenyi Li and Haodong Yu and Guoqing Ma and Jiajun Zha and Yuanming Yang and Bo Wang and Wei Tang and Wenbo Li and Haoyang Huang and Nan Duan},
      year={2026},
      eprint={2605.07327},
      archivePrefix={arXiv},
      primaryClass={cs.CV},
      url={https://arxiv.org/abs/2605.07327}, 
}

@inproceedings{mokady2022nulltextinversioneditingreal,
  title={Null-text inversion for editing real images using guided diffusion models},
  author={Mokady, Ron and Hertz, Amir and Aberman, Kfir and Pritch, Yael and Cohen-Or, Daniel},
  booktitle={2023 IEEE/CVF Conference on Computer Vision and Pattern Recognition (CVPR)},
  pages={6038--6047},
  year={2023},
  organization={IEEE}
}

@inproceedings{
meng2022sdedit,
title={{SDE}dit: Guided Image Synthesis and Editing with Stochastic Differential Equations},
author={Chenlin Meng and Yutong He and Yang Song and Jiaming Song and Jiajun Wu and Jun-Yan Zhu and Stefano Ermon},
booktitle={International Conference on Learning Representations},
year={2022},
url={https://openreview.net/forum?id=aBsCjcPu_tE}
}

@article{
luzi2024boomerang,
title={Boomerang: Local sampling on image manifolds using diffusion models},
author={Lorenzo Luzi and Paul M Mayer and Josue Casco-Rodriguez and Ali Siahkoohi and Richard Baraniuk},
journal={Transactions on Machine Learning Research},
issn={2835-8856},
year={2024},
url={https://openreview.net/forum?id=NYdThkjNW1},
note={}
}

@article{yu2015lsun,
  title={LSUN: Construction of a Large-scale Image Dataset using Deep Learning with Humans in the Loop},
  author={Yu, Fisher and Seff, Ari and Zhang, Yinda and Song, Shuran and Funkhouser, Thomas and Xiao, Jianxiong},
  journal={arXiv preprint arXiv:1506.03365},
  year={2015}
}

@inproceedings{
chen2024exploring,
title={Exploring Low-Dimensional Subspace in Diffusion Models for Controllable Image Editing},
author={Siyi Chen and Huijie Zhang and Minzhe Guo and Yifu Lu and Peng Wang and Qing Qu},
booktitle={The Thirty-eighth Annual Conference on Neural Information Processing Systems},
year={2024},
url={https://openreview.net/forum?id=50aOEfb2km}
}

@inproceedings{zhang2018unreasonable,
  title     = {The Unreasonable Effectiveness of Deep Features as a Perceptual Metric},
  author    = {Zhang, Richard and Isola, Phillip and Efros, Alexei A. and Shechtman, Eli and Wang, Oliver},
  booktitle = {Proceedings of the IEEE Conference on Computer Vision and Pattern Recognition},
  pages     = {586--595},
  year      = {2018}
}

@book{golub2013matrix,
  title={Matrix computations},
  author={Golub, Gene H and Van Loan, Charles F},
  volume={4},
  year={2013},
  publisher={JHU press}
}

@article{baydin2018automatic,
  title={Automatic differentiation in machine learning: a survey},
  author={Baydin, Atilim Gunes and Pearlmutter, Barak A and Radul, Alexey Andreyevich and Siskind, Jeffrey Mark},
  journal={Journal of machine learning research},
  volume={18},
  number={153},
  pages={1--43},
  year={2018}
}

@article{
oquab2024dinov,
title={{DINO}v2: Learning Robust Visual Features without Supervision},
author={Maxime Oquab and Timoth{\'e}e Darcet and Th{\'e}o Moutakanni and Huy V. Vo and Marc Szafraniec and Vasil Khalidov and Pierre Fernandez and Daniel HAZIZA and Francisco Massa and Alaaeldin El-Nouby and Mido Assran and Nicolas Ballas and Wojciech Galuba and Russell Howes and Po-Yao Huang and Shang-Wen Li and Ishan Misra and Michael Rabbat and Vasu Sharma and Gabriel Synnaeve and Hu Xu and Herve Jegou and Julien Mairal and Patrick Labatut and Armand Joulin and Piotr Bojanowski},
journal={Transactions on Machine Learning Research},
issn={2835-8856},
year={2024},
url={https://openreview.net/forum?id=a68SUt6zFt},
note={}
}
\bibliographystyle{iclr2027_conference}

\clearpage
\appendix
\raggedbottom
\section{Appendix}




\subsection{Ring of Gaussians toy experiment}
\label{app:repulsion-limits}

\paragraph{Toy distribution and velocity.}
Figure~\ref{fig:oscar_steps} starts from a centered two-dimensional Gaussian
with standard deviation 0.25. The target is a mixture of sixteen equally likely
Gaussian components whose centers are evenly spaced on a circle of radius 2.
Each component has standard deviation 0.08. Thus, each black plus sign in the
figure is the center of one target Gaussian. For every source-target pair, we
use the straight path $X_t=(1-t)Z+tY$. The velocity is
$v_t(x)=\mathbb E[Y-Z\mid X_t=x]$, which is the average displacement of all pairs whose
path could pass through $x$ at time $t$. Near the center, several target modes
are plausible and contribute to this average. As a trajectory approaches the
ring, nearby modes receive more weight and determine its final direction. The
distribution is analytic, so we evaluate this velocity directly without
training a network. Specifically, with
$c_j=2(\cos(2\pi j/16),\sin(2\pi j/16))^\top$, the exact field is
\[
\begin{aligned}
q_t&=(1-t)^2(0.25)^2+t^2(0.08)^2,
&a_t&=\frac{t(0.08)^2-(1-t)(0.25)^2}{q_t},\\
w_j(x,t)&=
\frac{\exp\!\left(-\|x-tc_j\|_2^2/(2q_t)\right)}
{\sum_{\ell=0}^{15}\exp\!\left(-\|x-tc_\ell\|_2^2/(2q_t)\right)},
&v_t(x)&=a_tx+(1-a_tt)\sum_{j=0}^{15}w_j(x,t)c_j.
\end{aligned}
\]
Here, $w_j(x,t)$ is the weight assigned to target mode $j$ at the current
location and time.


\paragraph{Compared methods.}
All three panels start from the same eight source samples and use 800
integration steps. The left panel uses ordinary flow matching. The middle
panel adds deterministic log-determinant repulsion at every step, with a force
that gradually vanishes near the endpoint. This isolates the deterministic
repulsive component used to motivate OSCAR \citep{wu2025oscar}. The right panel applies JIVE only once, after step 100.
For each sample separately, JIVE finds the local direction that most strongly
changes its predicted endpoint, moves the sample by 0.25 in one of the two
signs along that direction, and then resumes ordinary flow matching. JIVE does
not inspect or coordinate the other seven samples.

\subsection{Proofs of the main results}
\label{app:theorem-proofs}

\begin{proof}[Proof of Theorem~\ref{thm:pairwise-diversity}]
Conditional uniformity on the unit sphere gives
\[
\mathbb E[\delta\mid X]=0,\qquad
\mathbb E[\delta\delta^\top\mid X]=\frac{\varepsilon^2}{k}Q(X)Q(X)^\top.
\]
Write $B=J(X)\delta$. Then $\mathbb E[B\mid X]=0$, so the cross-covariance of $Y=F_t(X)$ and $B$ vanishes. Hence by the law of total covariance
\[
\operatorname{Cov}(\widetilde Y)
=\operatorname{Cov}(Y)+\frac{\varepsilon^2}{k}
\mathbb E[J(X)Q(X)Q(X)^\top J(X)^\top].
\]
For independent identically distributed vectors $W_1,W_2$ with finite second moments, expanding the square yields
$\mathbb E\|W_1-W_2\|_2^2=2\operatorname{Tr}\operatorname{Cov}(W_1)$.
Applying this identity to $\widetilde Y$ and using cyclicity of the trace proves the formula.
\end{proof}

\begin{proposition}[Optimal trace subspace]
\label{prop:optimal-trace-subspace}
For $J_t\in\mathbb R^{d\times d}$ with singular values $\sigma_1\geq\cdots\geq\sigma_d\geq0$ and any $Q\in\mathbb R^{d\times k}$ with $Q^\top Q=I_k$,
\[
\operatorname{Tr}(Q^\top J_t^\top J_tQ)\leq\sum_{i=1}^k\sigma_i^2,
\]
with equality for $Q=[v_1,\ldots,v_k]$.
\end{proposition}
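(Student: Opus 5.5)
The plan is to diagonalize $J_t^\top J_t$ and reduce the trace to a weighted sum of the squared singular values, with weights constrained by orthonormality. Write the SVD $J_t=U\Sigma V^\top$ from Section~\ref{subsec:singular-to-eigen}, so that $J_t^\top J_t=V\Sigma^2V^\top=\sum_{i=1}^d\sigma_i^2 v_iv_i^\top$. For any $Q$ with orthonormal columns $q_1,\ldots,q_k$,
\[
\operatorname{Tr}(Q^\top J_t^\top J_tQ)=\sum_{j=1}^k q_j^\top J_t^\top J_t q_j=\sum_{i=1}^d \sigma_i^2 c_i,\qquad c_i=\sum_{j=1}^k\langle v_i,q_j\rangle^2=\|Q^\top v_i\|_2^2 .
\]

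Next I will establish the two constraints on the weights $c_i$. First, $0\le c_i\le 1$: the value $\|Q^\top v_i\|_2^2$ is the squared norm of the orthogonal projection of the unit vector $v_i$ onto $\operatorname{span}(Q)$. Second, $\sum_i c_i=\|Q^\top V\|_F^2=\|Q\|_F^2=k$, using that $V$ is orthogonal.

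The main step is the linear maximization: maximize $\sum_i\sigma_i^2c_i$ subject to $0\le c_i\le1$ and $\sum_ic_i=k$. Since the $\sigma_i^2$ are sorted in decreasing order, the maximum is $\sum_{i=1}^k\sigma_i^2$, attained at $c=(1,\ldots,1,0,\ldots,0)$. I will verify this by an exchange argument:
\[
\sum_i\sigma_i^2c_i-\sum_{i\le k}\sigma_i^2=\sum_{i>k}\sigma_i^2c_i-\sum_{i\le k}\sigma_i^2(1-c_i)\le\sigma_k^2\Big(\sum_{i>k}c_i-\sum_{i\le k}(1-c_i)\Big)=0 .
\]
The inequality bounds $\sigma_i^2\le\sigma_k^2$ for $i>k$ and $\sigma_i^2\ge\sigma_k^2$ for $i\le k$, and the final bracket vanishes because $\sum_i c_i=k$. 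This is the only nontrivial point; it is the Ky Fan maximum principle, which could alternatively be cited directly.

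For the equality case, take $Q=[v_1,\ldots,v_k]$. Then $Q^\top J_t^\top J_tQ=\operatorname{diag}(\sigma_1^2,\ldots,\sigma_k^2)$, whose trace is $\sum_{i=1}^k\sigma_i^2$.
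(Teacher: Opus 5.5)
Your proposal is correct and takes the same route as the paper: the paper also writes the trace as $\sum_i\sigma_i^2 w_i$ with $w_i=v_i^\top QQ^\top v_i\in[0,1]$ and $\sum_i w_i=k$, and concludes by the same linear-maximization step. Your exchange argument spells out the step the paper states in one line (putting unit weight on the $k$ largest $\sigma_i^2$).
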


\begin{proof}[Proof of Proposition~\ref{prop:optimal-trace-subspace}]
Let $P=QQ^\top$ and write $J_t^\top J_t=\sum_i\sigma_i^2v_iv_i^\top$. Then
\[
\operatorname{Tr}(Q^\top J_t^\top J_tQ)=\sum_i\sigma_i^2w_i,
\qquad w_i=v_i^\top Pv_i\in[0,1],\quad\sum_iw_i=k.
\]
The largest weighted sum assigns unit weight to the $k$ largest squared singular values. Choosing $Q=[v_1,\ldots,v_k]$ achieves this bound.
\end{proof}


\subsection{Direction estimation algorithm}
\label{app:direction-details}

Algorithm~\ref{alg:jacobian-rf-sampling} is the concrete rank-one implementation used for rectified-flow generation. 

\begin{algorithm}[H]
\caption{Jacobian-guided trajectory perturbation for rectified flow}
\label{alg:jacobian-rf-sampling}
\begin{algorithmic}[1]
\Require Velocity field $v_\theta(x,t)$, total denoising steps $N$, initial noise $x_0\sim\mathcal N(0,\mathbf I)$, power iterations $K_{\mathrm{pow}}$, finite-difference step $\epsilon$, target perturbation norm $\gamma$
\Ensure Generated sample $x_N$
\State Sample $v^{(0)}\sim\mathcal N(0,\mathbf I)$
\For{$\ell=1\ \text{to}\ K_{\mathrm{pow}}$} \Comment{Power iteration via finite differences}
  \State $v^{(\ell)}\gets\frac{[x_0+\epsilon v^{(\ell-1)}+v_\theta(x_0+\epsilon v^{(\ell-1)},0)]-[x_0+v_\theta(x_0,0)]}{\epsilon}$
  \State $v^{(\ell)}\gets v^{(\ell)}/\|v^{(\ell)}\|_2$
\EndFor
\State Sample $s \sim \text{Uniform}(\{-1, +1\})$
\State $x_0\gets x_0+s\gamma\,v^{(K_{\mathrm{pow}})}$ \Comment{Add perturbation vector of norm $\gamma$}
\For{$i=0\ \text{to}\ N-1$}
  \State $t_i\gets i/N$, \quad $\Delta t\gets 1/N$
  \State $x_{i+1}\gets x_i+\Delta t\,v_\theta(x_i,t_i)$
\EndFor
\State \Return $x_N$
\end{algorithmic}
\end{algorithm}

\subsection{Eigenvector and singular vector alignment}
\label{app:gaussian-model}

In this section, we present the observation that the top eigenvector and top singular vector have high alignment in our tested models. Alignment is especially high for multi-step models.

\begin{figure}[H]
\centering
\begin{tikzpicture}
\node[anchor=south west,inner sep=0] (plot) at (0,0)
  {\includegraphics[width=\linewidth]{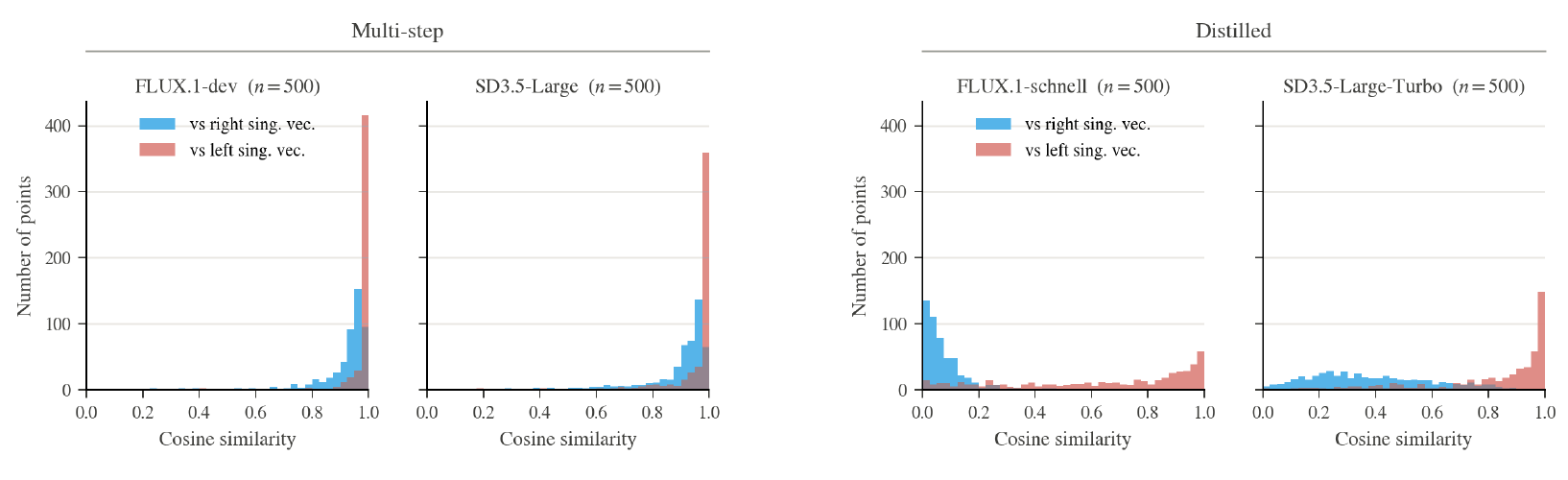}};
\begin{scope}[x={(plot.south east)},y={(plot.north west)}]
\fill[white] (0,0.875) rectangle (1,1);
\node[text=black,font=\fontsize{9.02}{11}\selectfont\bfseries] at (0.254,0.953) {Multi-step};
\node[text=black,font=\fontsize{9.02}{11}\selectfont\bfseries] at (0.788,0.953) {Distilled};
\draw[gray,dashed] (0.5,0.08) -- (0.5,0.87);
\end{scope}
\end{tikzpicture}
\caption{\textbf{Eigenvector and singular vector alignment across models.} The histograms show absolute cosine similarity of the leading eigenvector with the leading right and left singular vectors at $t=0$ ($n=500$ per model). Blue/red denote right/left singular vectors throughout. Alignment is strong in multi-step models (left) and weaker in distilled models (right), especially with the right singular vector.}
\label{fig:eig-sv-alignment}
\end{figure}

\subsection{Why random directions don't work.}
\label{app:random_dir}

We observed that random perturbation directions in general fail to meaningfully alter the final image. This can be explained by the endpoint displacement approximation $\|F_t(x_t+\delta)-F_t(x_t)\|_2^2\approx \|J_t\delta\|_2^2$. We have that 
\[
\|J_t\delta\|_2^2=\sum_i\sigma_i^2\langle \delta,v_i\rangle^2,
\]

so for a random direction $\delta$ much of its energy is placed in right singular vectors $v_i$ with small singular values. Thus, $\|J_t\delta\|_2^2$ as a whole is small. We verify in Figure \ref{fig:spectrum_decay} that the spectrum in distilled models decays rapidly, so aligning with top singular directions results in meaningfully higher $\|J_t\delta\|_2^2$. We compare tradeoff curves of one $J_t$ iteration with random directions in Figure \ref{fig:jvp-vs-random}.

\begin{figure}[!htb]
\captionsetup{justification=raggedright,singlelinecheck=false}
\centering
\includegraphics[width=0.5\linewidth]{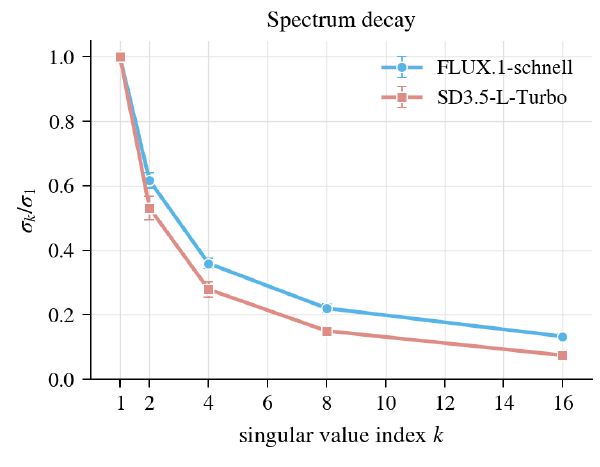}
\caption{\textbf{The leading singular values decay rapidly.} Spectra for two distilled models, normalized by the largest singular value. The spectrum decays rapidly in both models, demonstrating that alignment with top directions allows for more endpoint displacement per unit perturbation}
\label{fig:spectrum_decay}
\end{figure}

\subsection{Runtime and Approximation Power for Perturbation Computation Methods}
\label{app:runtime-comparison}

We compare the approximation power of the two computational methods in Section \ref{subsec:compute_direction}: $J_t^\top J_t$ and JVP. The approximation power is demonstrated in Table~\ref{tab:ek_gain_full} and the full table of run time is in Table \ref{tab:runtime_full}.

Based on Table \ref{tab:ek_gain_full}, $J_t^\top J_t$ is our stronger low-cost estimator. One product places more than 91\% of the direction's energy in the top four right singular directions in both models. 
Although the JVP direction may not align with a top singular
vector, Table~\ref{tab:ek_gain_full} shows that it captures substantially more
energy in the leading singular subspace than a random direction and
achieves greater endpoint diversity gain. This is our cheapest alternative.

\begin{table}[htbp]
\centering
\caption{
Energy $E_k=\sum_{i=1}^k|\langle \delta,v_i\rangle|^2$ in the top-$k$
right singular subspace for unit $\delta$, and gain relative to
$\sigma_{\mathrm{rms}}=(\sum_i \sigma_i^2/d)^{1/2}$.
The random row is an analytic value ($k/d$).
Both JVP and $J_t^\top J_t$ are evaluated with one iteration.
}
\label{tab:ek_gain_full}

\small
\setlength{\tabcolsep}{4pt}

\begin{tabular}{llcccccc}
\toprule
Model
& Direction
& $E_1$
& $E_2$
& $E_4$
& $E_8$
& $E_{16}$
& $\|Jw\|_2/\sigma_{\mathrm{rms}}$
\\
\midrule

\multirow{3}{*}{FLUX.1-schnell}
& Random
& 1.5e{-}5
& 3.1e{-}5
& 6.1e{-}5
& 1.2e{-}4
& 2.4e{-}4
& 1.0
\\

& \textcolor{paperBlue}{JVP}
& \textcolor{paperBlue}{0.033}
& \textcolor{paperBlue}{0.050}
& \textcolor{paperBlue}{0.072}
& \textcolor{paperBlue}{0.102}
& \textcolor{paperBlue}{0.127}
& \textcolor{paperBlue}{30.8}
\\

& \textcolor{paperOrange}{$J_t^\top J_t$}
& \textcolor{paperOrange}{0.624}
& \textcolor{paperOrange}{0.803}
& \textcolor{paperOrange}{0.915}
& \textcolor{paperOrange}{0.969}
& \textcolor{paperOrange}{0.989}
& \textcolor{paperOrange}{198.3}
\\

\midrule

\multirow{3}{*}{SD3.5-Large-Turbo}
& Random
& 3.8e{-}6
& 7.6e{-}6
& 1.5e{-}5
& 3.1e{-}5
& 6.1e{-}5
& 1.0
\\

& \textcolor{paperBlue}{JVP}
& \textcolor{paperBlue}{0.074}
& \textcolor{paperBlue}{0.108}
& \textcolor{paperBlue}{0.152}
& \textcolor{paperBlue}{0.226}
& \textcolor{paperBlue}{0.277}
& \textcolor{paperBlue}{102.7}
\\

& \textcolor{paperOrange}{$J_t^\top J_t$}
& \textcolor{paperOrange}{0.648}
& \textcolor{paperOrange}{0.816}
& \textcolor{paperOrange}{0.916}
& \textcolor{paperOrange}{0.981}
& \textcolor{paperOrange}{0.992}
& \textcolor{paperOrange}{321.6}
\\

\bottomrule
\end{tabular}
\end{table}

\begin{table}[H]
\centering
\caption{Runtime and memory overhead of Jacobian-vector and \(J_t^\top J_t\) products on FLUX.1-schnell at \(512\times512\) resolution on an A100. Only the transformer is resident on the GPU. FD is finite difference.}
\label{tab:runtime_full}
\small
\setlength{\tabcolsep}{5pt}
\begin{tabular}{lcccc}
\toprule
Configuration & Seconds & Time$\times$Base & FLOPs$\times$Base & Peak Mem.\,(GB) \\
\midrule
Baseline & 0.307 & 1.00 & 1.00 & 22.3 \\
JVP FD (forward) & 0.617 & 2.01 & 2.00 & 22.3 \\
JVP FD (central) & 0.619 & 2.02 & 2.00 & 22.3 \\
JVP exact & 1.733 & 5.64 & 3.00 & 23.5 \\
$J_t^\top J_t$ FD (forward) & 1.302 & 4.24 & 5.04 & 35.4 \\
$J_t^\top J_t$ exact & 2.425 & 7.90 & 6.03 & 35.4 \\
$J_t^\top J_t$ FD (forward) w/ grad checkpointing & 1.651 & 5.38 & 6.04 & 23.0 \\
$J_t^\top J_t$ exact w/ grad checkpointing & 2.790 & 9.09 & 7.03 & 23.6 \\
\bottomrule
\end{tabular}
\end{table}

\begin{table}[htpb]
\centering
\caption{Comparing the low-cost version of JIVE with using extra denoising steps. Experiments were conducted using GenEval prompts at 4 images per prompt.}
\newcommand{\partise}[2]{#1{\fontsize{6}{7}\selectfont$\pm$#2}}
\newcommand{\jtablecell}[1]{\textcolor{paperBlue}{#1}}
\newcommand{\jtjtablecell}[1]{\textcolor{paperOrange}{#1}}
\setlength{\tabcolsep}{5pt}
\begin{tabular}{lrrrrrc}
\toprule
& \multicolumn{2}{c}{Diversity}
& \multicolumn{4}{c}{Fidelity} \\
\cmidrule(lr){2-3}\cmidrule(lr){4-7}
Method & F-Vendi $\uparrow$ & P-Vendi $\uparrow$ & CLIP $\uparrow$ & HPSv2 $\uparrow$ & PickScore $\uparrow$ & GenEval $\uparrow$ \\
\midrule
\multicolumn{7}{l}{\textbf{FLUX.1-schnell}
\quad (1 step, $512\times512$)} \\
\midrule
Deterministic
& \partise{1.940}{0.019} & \partise{1.683}{0.013} & \partise{33.189}{0.138} & \partise{30.540}{0.109} & \partise{23.330}{0.044} & 0.700 \\
Extra denoising steps (5)
& \partise{2.028}{0.021} & \partise{1.710}{0.013} & \partise{32.934}{0.134} & \partise{30.483}{0.109} & \partise{23.249}{0.043} & 0.686 \\
\jtablecell{\textbf{Ours} (JVP, FD)}
& \jtablecell{\partise{2.200}{0.021}} & \jtablecell{\partise{1.964}{0.018}} & \jtablecell{\partise{32.932}{0.132}} & \jtablecell{\partise{28.890}{0.119}} & \jtablecell{\partise{22.945}{0.044}} & \jtablecell{0.684} \\
\jtjtablecell{\textbf{Ours} ($J_t^\top J_t$, FD)}
& \jtjtablecell{\textbf{\partise{2.272}{0.023}}} & \jtjtablecell{\textbf{\partise{2.080}{0.016}}} & \jtjtablecell{\partise{32.909}{0.132}} & \jtjtablecell{\partise{28.858}{0.115}} & \jtjtablecell{\partise{22.910}{0.043}} & \jtjtablecell{0.672} \\
\midrule
\multicolumn{7}{l}{\textbf{SD3.5-Large-Turbo}
\quad (4 steps, $1024\times1024$)} \\
\midrule
Deterministic
& \partise{1.932}{0.021} & \partise{1.369}{0.007} & \partise{33.650}{0.127} & \partise{28.751}{0.116} & \partise{23.112}{0.043} & 0.708 \\
Extra denoising steps (8)
& \partise{1.998}{0.022} & \partise{1.485}{0.010} & \partise{33.434}{0.130} & \partise{29.465}{0.119} & \partise{23.230}{0.043} & 0.694 \\
\jtablecell{\textbf{Ours} (JVP, FD)}
& \jtablecell{\partise{2.195}{0.023}} & \jtablecell{\partise{1.964}{0.015}} & \jtablecell{\partise{32.860}{0.125}} & \jtablecell{\partise{27.871}{0.119}} & \jtablecell{\partise{22.677}{0.043}} & \jtablecell{0.657} \\
\jtjtablecell{\textbf{Ours} ($J_t^\top J_t$, FD)}
& \jtjtablecell{\textbf{\partise{2.276}{0.023}}} & \jtjtablecell{\textbf{\partise{1.977}{0.013}}} & \jtjtablecell{\partise{32.910}{0.126}} & \jtjtablecell{\partise{27.667}{0.121}} & \jtjtablecell{\partise{22.638}{0.041}} & \jtjtablecell{0.674} \\
\bottomrule
\end{tabular}
\label{tab:compute-matched}
\end{table}


\subsection{Additional tradeoff curves}
\label{app:direction-ablations}

\begin{figure}[H]
    \centering
    \includegraphics[width=0.6\linewidth]{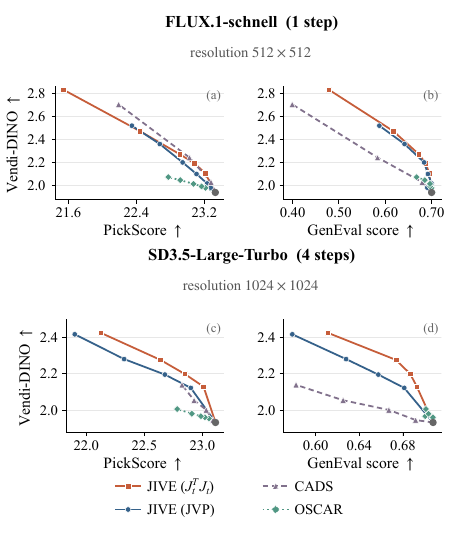}
    \caption{\textbf{PickScore and GenEval score tradeoffs on GenEval.} Companion to Figure~\ref{fig:geneval-frontiers} with the same settings: PickScore (left) and GenEval score (right) versus Vendi-DINO, averaged over 553 prompts. On both FLUX.1-schnell and SD3.5-Large-Turbo, both low-cost variants of JIVE have a better diversity-GenEval tradeoff than CADS or OSCAR.}
    \label{fig:geneval-appendix-frontiers}
\end{figure}

\begin{figure}[H]
    \centering
    \includegraphics[width=1.0\textwidth]{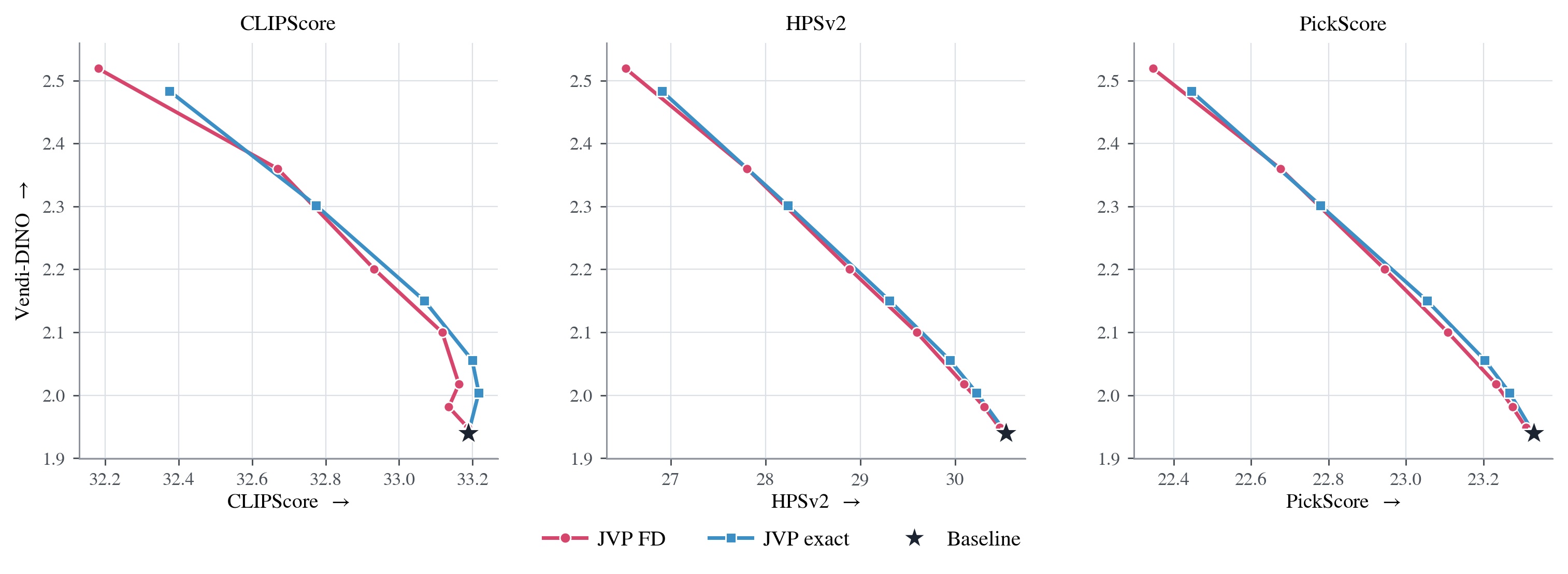}
    \caption{Finite-difference and exact JVP estimates produce similar diversity-quality frontiers. Points are evaluated on Flux.1-schnell at 1 step using GenEval prompts at 4 images per prompt}
    \label{fig:jvp-fd-vs-exact}
\end{figure}

\begin{figure}[H]
    \centering
    \includegraphics[width=1.0\textwidth]{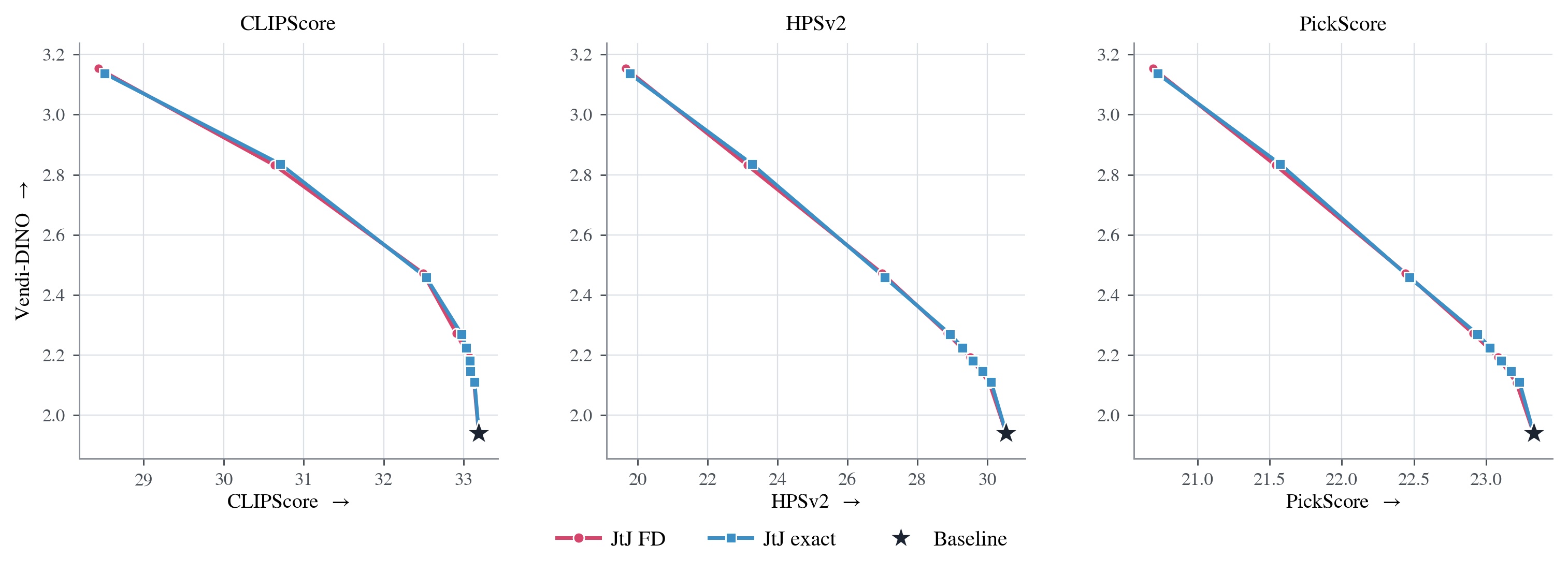}
    \caption{Finite-difference and exact $J^\top J$ estimates produce similar diversity-quality frontiers. Points are evaluated on Flux.1-schnell at 1 step using GenEval prompts at 4 images per prompt}
    \label{fig:jtj-fd-vs-exact}
\end{figure}
\begin{figure}[H]
    \centering
    \includegraphics[width=1.0\textwidth]{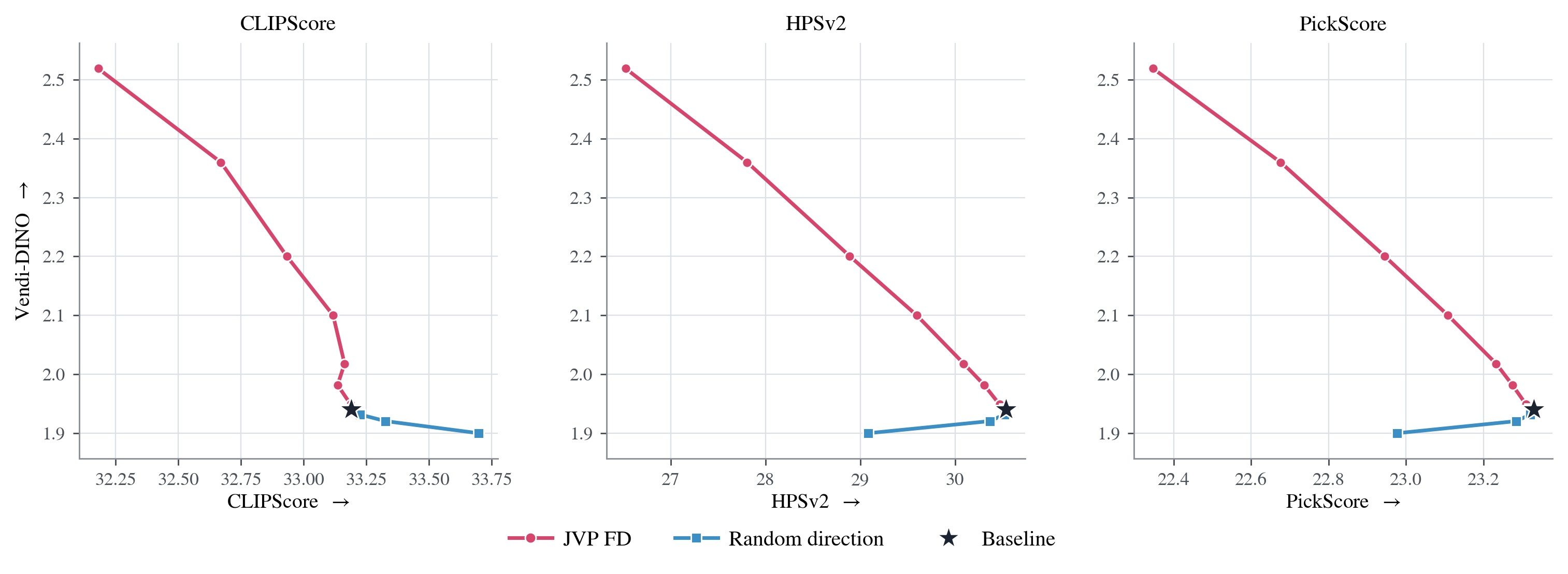}
    \caption{Jacobian-guided perturbation versus random perturbation. Random directions yield almost none of the diversity gain, so the improvement comes from the choice of direction rather than from injected norm. Points are evaluated on Flux.1-schnell at 1 step using GenEval prompts at 4 images per prompt}
    \label{fig:jvp-vs-random}
\end{figure}

\FloatBarrier

\newpage
\subsection{Additional qualitative samples}
\label{app:qualitative-samples}

\begin{figure}[H]
    \centering
    \includegraphics[width=0.80\linewidth]{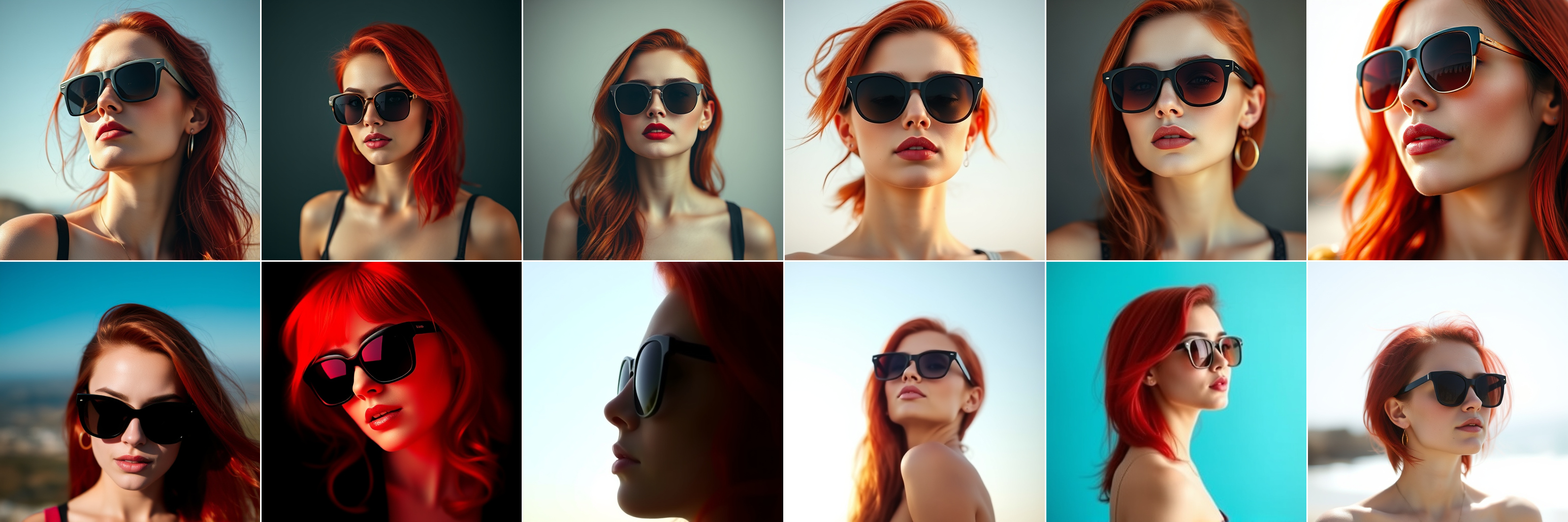}\\[1pt]
    {\small a woman with sunglasses and red hair}\\[5pt]
    \includegraphics[width=0.80\linewidth]{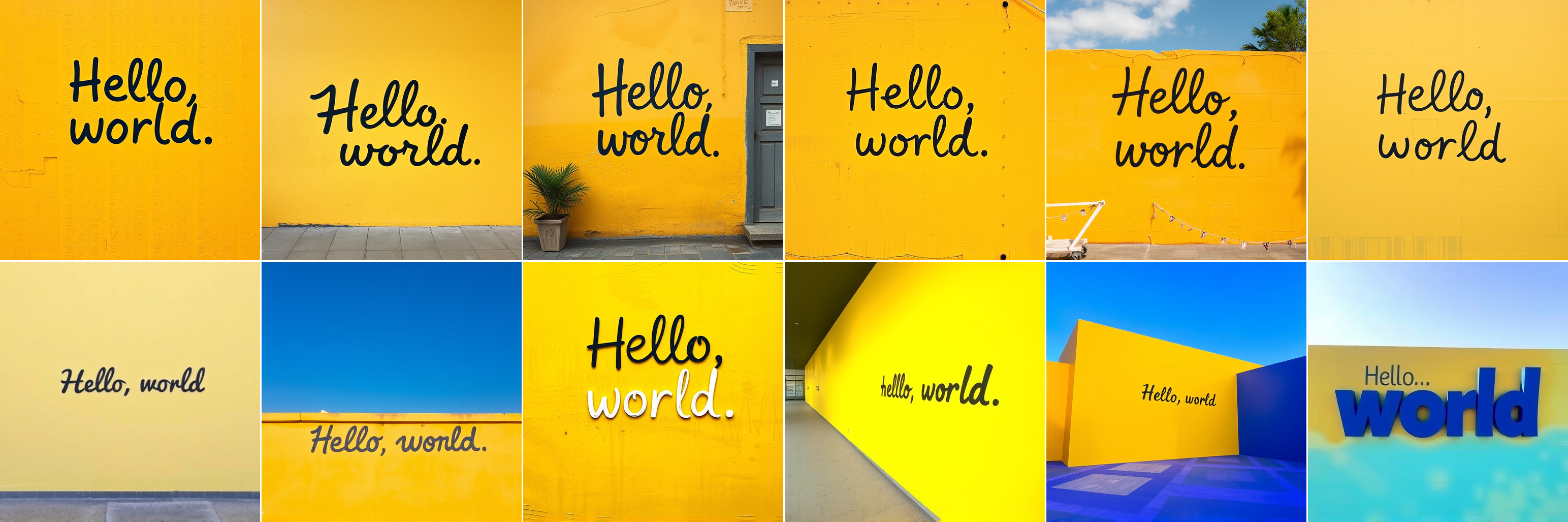}\\[1pt]
    {\small a yellow wall with ``Hello, world.'' written on it}\\[5pt]
    \includegraphics[width=0.80\linewidth]{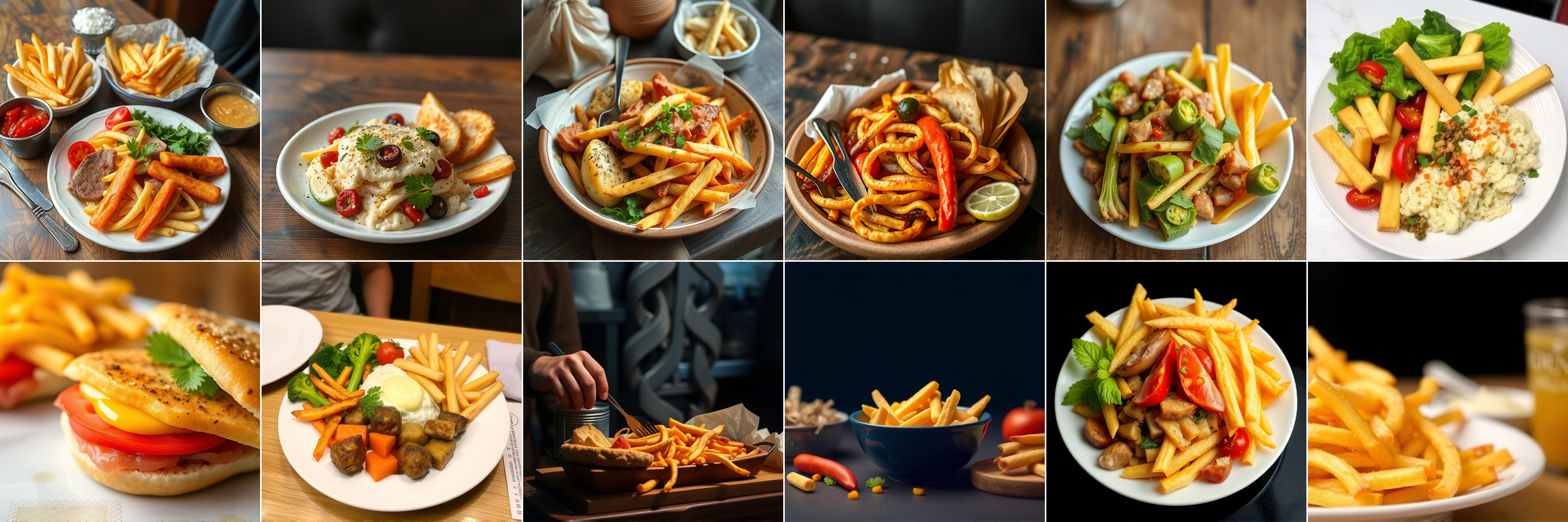}\\[1pt]
    {\small food}\\[5pt]
    \includegraphics[width=0.80\linewidth]{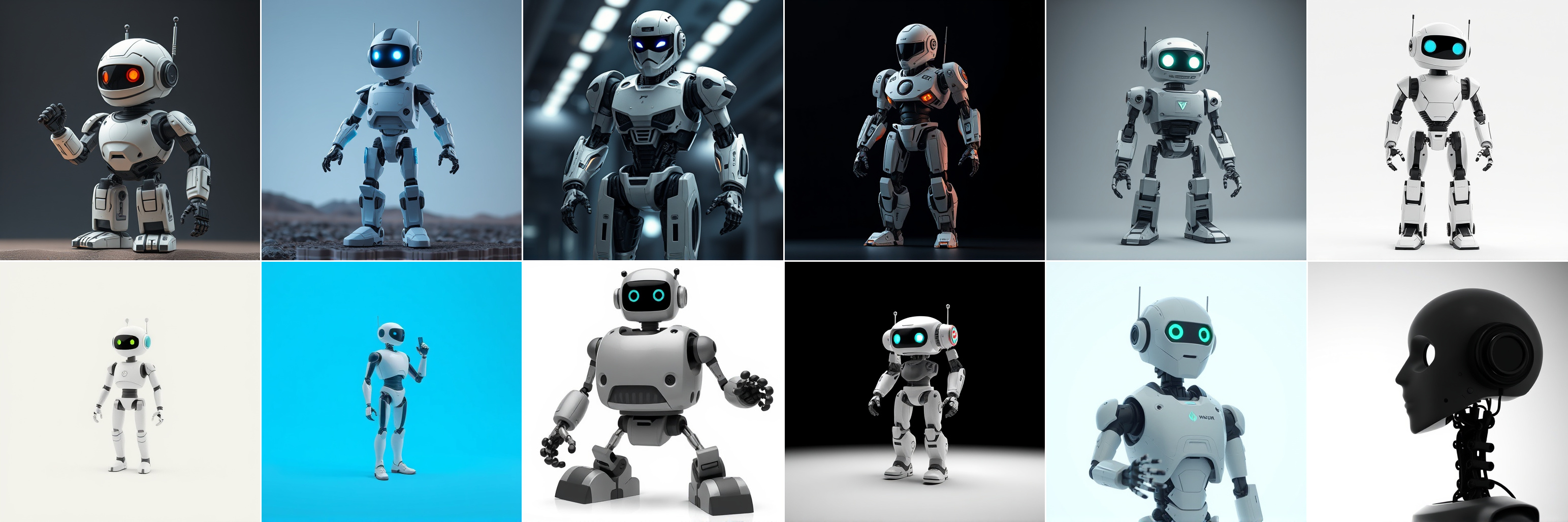}\\[1pt]
    {\small a robot}
    \caption{\textbf{Qualitative diversity across twelve prompts.} Each prompt block shows six matched samples from the unperturbed sampler in the upper row and JIVE in the lower row. Both rows use the same initial seeds. Across varied subjects and compositions, JIVE produces broader changes in appearance, viewpoint, and style while preserving the prompt content. Prompts 1-4 are shown here, and prompts 5-12 continue on the following pages.}
    \label{fig:qualitative-selected-prompts}
\end{figure}

\begin{figure}[p]
    \ContinuedFloat
    \centering
    \includegraphics[width=0.80\linewidth]{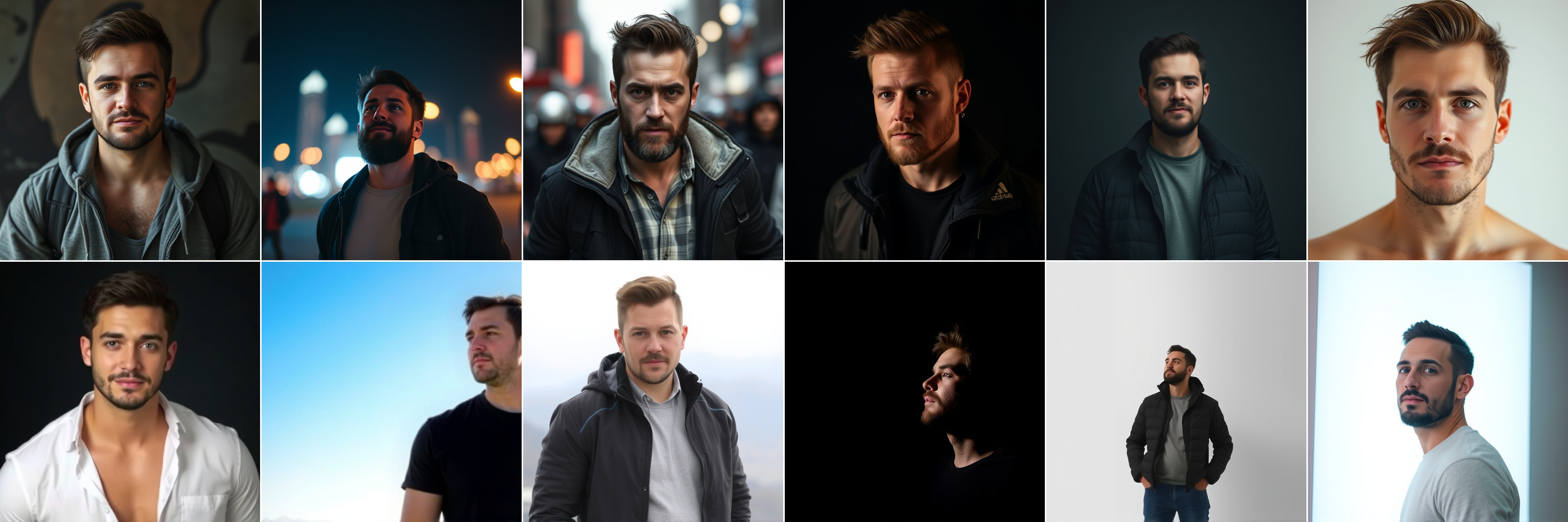}\\[1pt]
    {\small a man}\\[5pt]
    \includegraphics[width=0.80\linewidth]{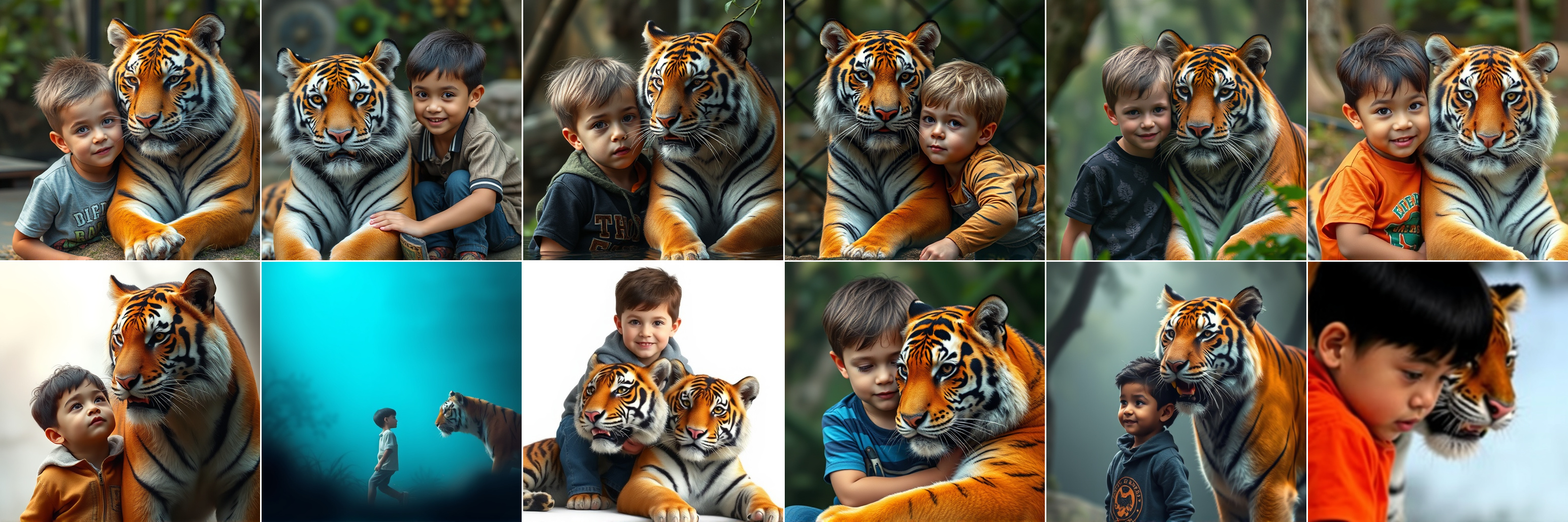}\\[1pt]
    {\small a boy and a tiger}\\[5pt]
    \includegraphics[width=0.80\linewidth]{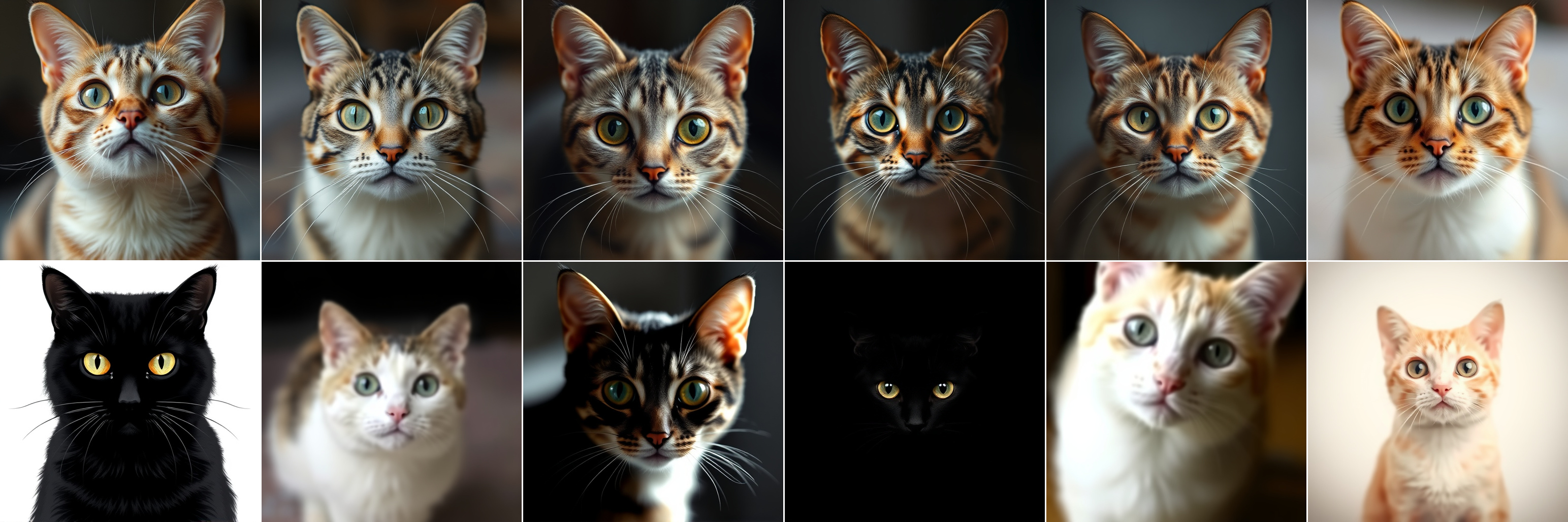}\\[1pt]
    {\small a cat}\\[5pt]
    \includegraphics[width=0.80\linewidth]{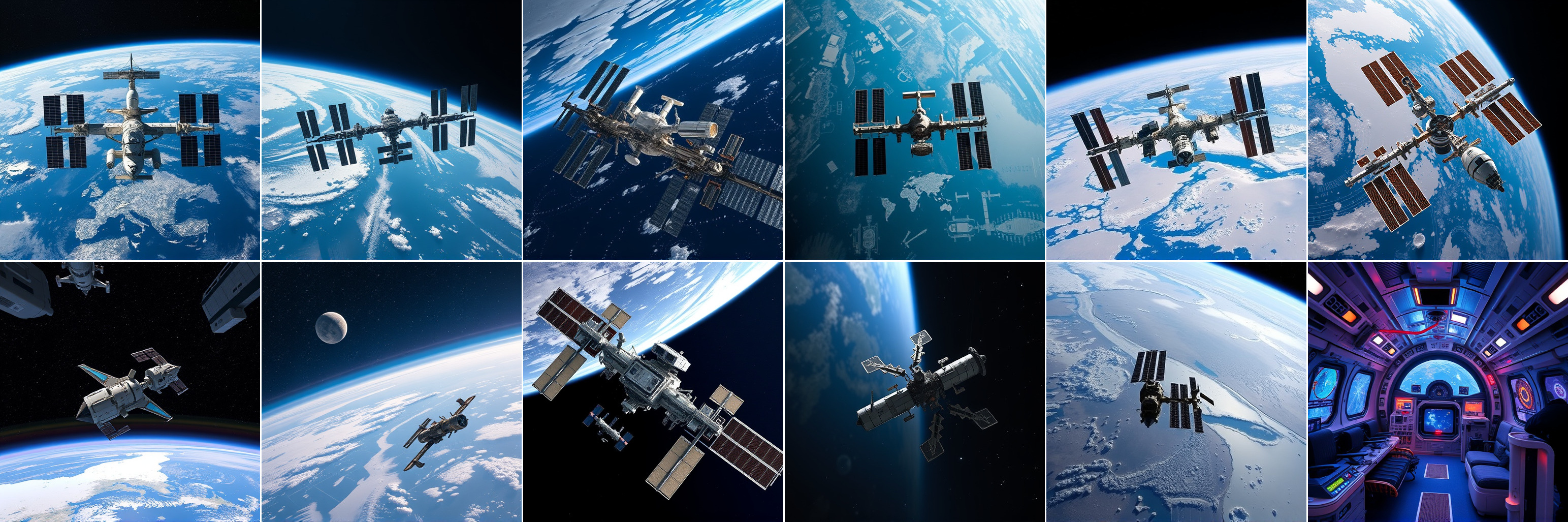}\\[1pt]
    {\small the International Space Station}
    \caption{\textbf{Qualitative diversity across twelve prompts (continued).} Prompts 5-8. Upper rows show unperturbed samples and lower rows show JIVE samples generated from the same initial seeds.}
\end{figure}

\begin{figure}[p]
    \ContinuedFloat
    \centering
    \includegraphics[width=0.80\linewidth]{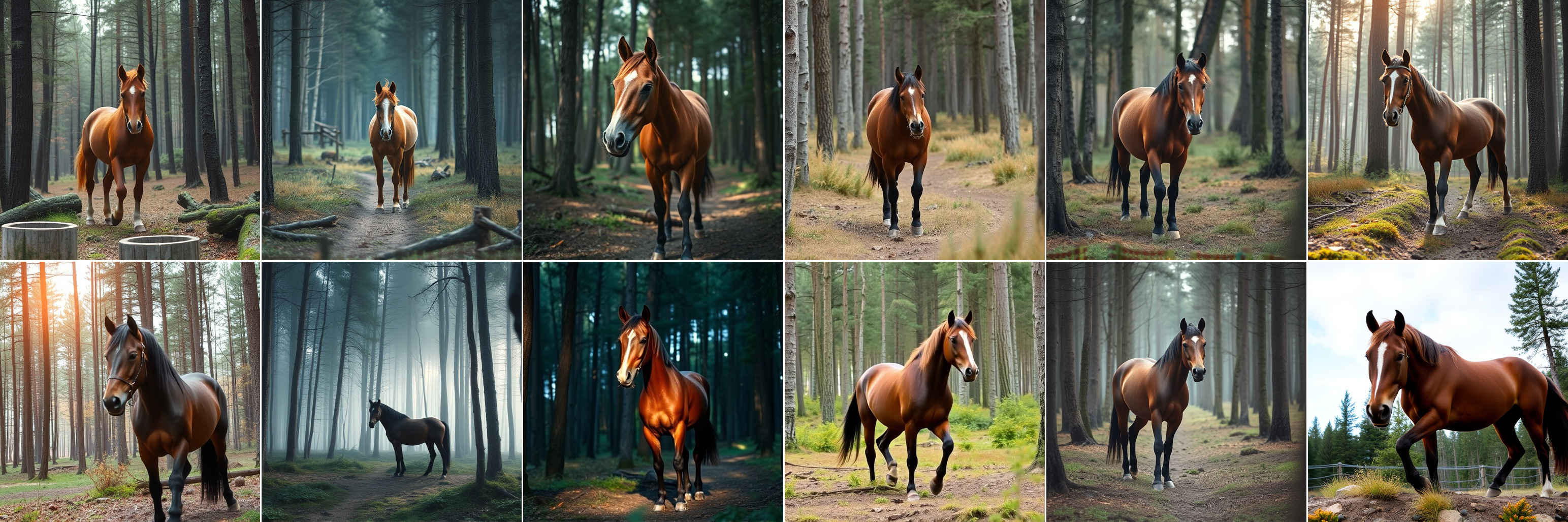}\\[1pt]
    {\small a horse in a forest}\\[5pt]
    \includegraphics[width=0.80\linewidth]{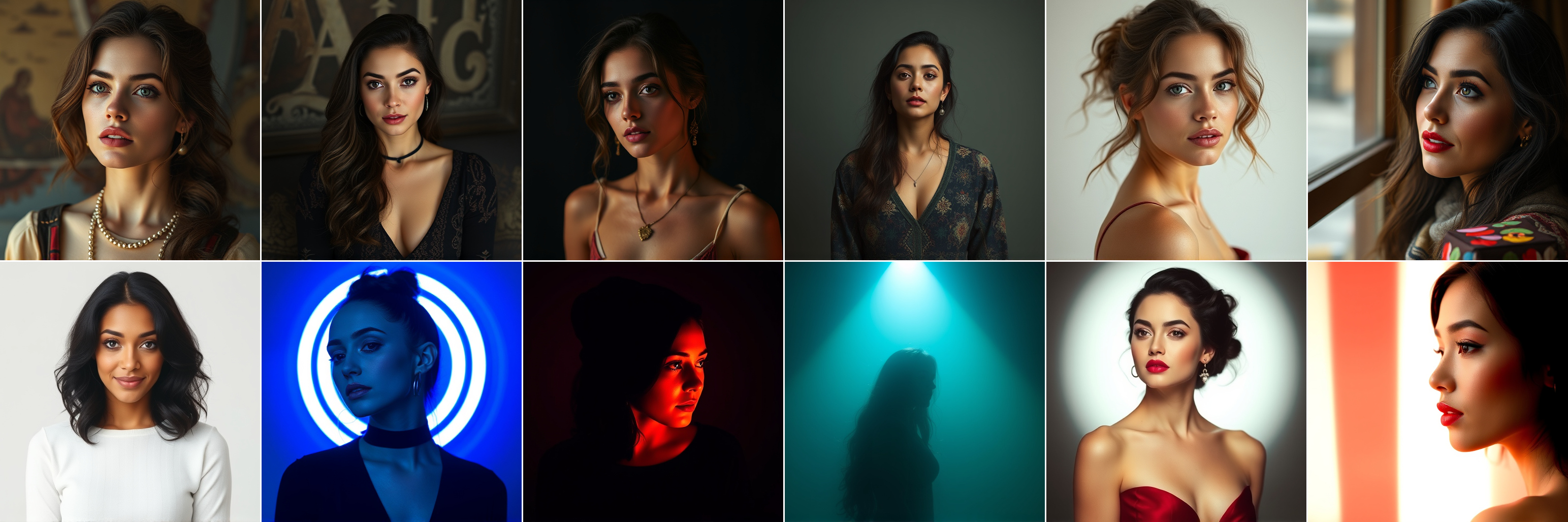}\\[1pt]
    {\small a woman}\\[5pt]
    \includegraphics[width=0.80\linewidth]{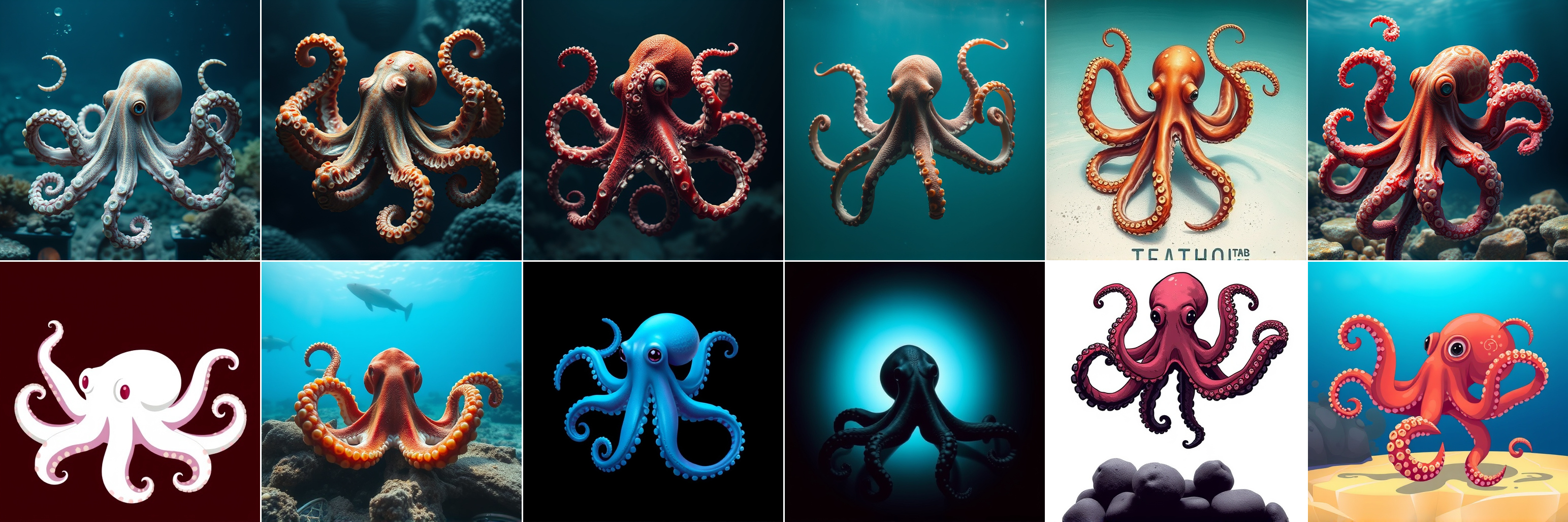}\\[1pt]
    {\small an octopus}\\[5pt]
    \includegraphics[width=0.80\linewidth]{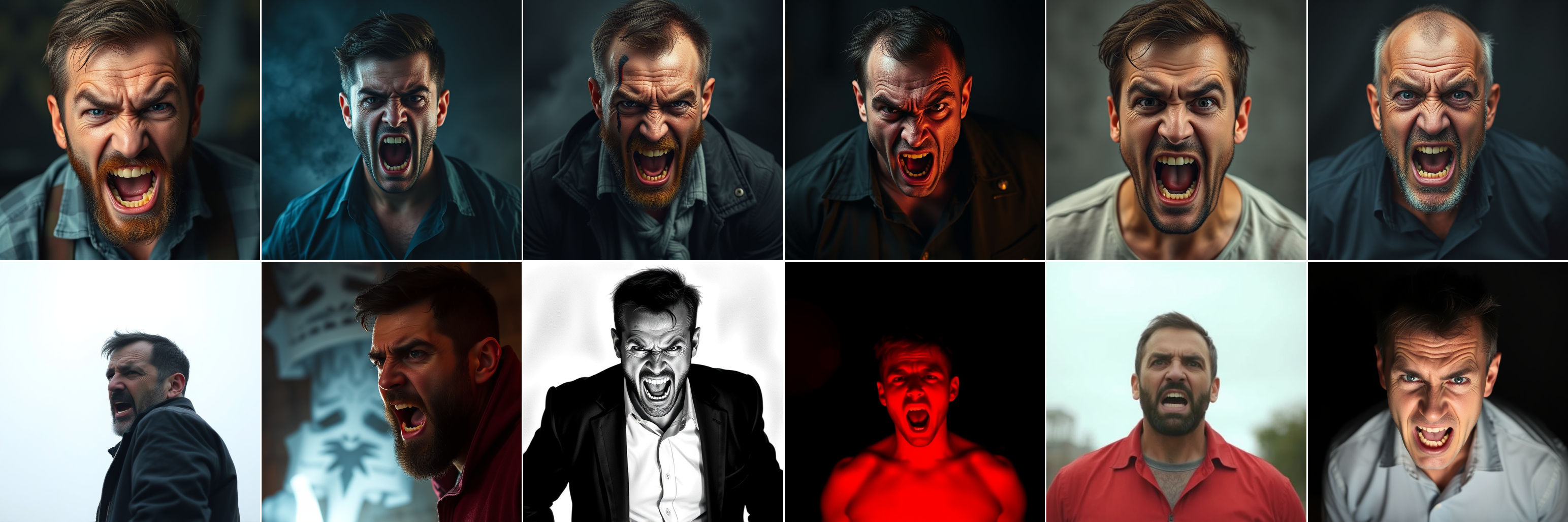}\\[1pt]
    {\small an angry man}
    \caption{\textbf{Qualitative diversity across twelve prompts (continued).} Prompts 9-12. Upper rows show unperturbed samples and lower rows show JIVE samples generated from the same initial seeds.}
\end{figure}

\clearpage

\subsection{Category-level results}
\label{app:category-results}

\begin{table}[H]
\caption{Category-level results on the curated prompt set. Each entry is the mean over 10 prompts and 128 images per arm. Panels (a) and (b) cover four categories each. Shading marks the primary Feature Vendi criteria. Boldface denotes the best result within each category and tier. For all metrics, higher values are better.}
\label{tab:per_category}
\centering
\footnotesize
\renewcommand{\arraystretch}{1.05}
\setlength{\tabcolsep}{2.5pt}

\begin{subtable}{\linewidth}
\caption{Objects, Human/Animals, Nature, and Scenes.}
\label{tab:per_category_1}
\resizebox{\linewidth}{!}{%
\begin{tabular}{@{}ll *{4}{@{\hspace{8pt}}ccc}@{}}
\toprule
& & \multicolumn{3}{c}{Objects}
& \multicolumn{3}{c}{Human/Animals}
& \multicolumn{3}{c}{Nature}
& \multicolumn{3}{c}{Scenes} \\
\cmidrule(lr){3-5}
\cmidrule(lr){6-8}
\cmidrule(lr){9-11}
\cmidrule(lr){12-14}
Tier & Metric
& Det. & OSCAR & JIVE
& Det. & OSCAR & JIVE
& Det. & OSCAR & JIVE
& Det. & OSCAR & JIVE \\
\midrule

\multirow{8}{*}{\textsc{General}}
& Feature Vendi
& 3.42
& 3.48
& \textbf{4.58}
& 5.10
& 5.17
& \textbf{7.41}
& 3.62
& 3.69
& \textbf{6.09}
& 4.25
& 4.36
& \textbf{5.59} \\

& Feature Vendi (grouped)
& 2.85
& 2.89
& \textbf{3.59}
& 3.76
& 3.79
& \textbf{5.11}
& 2.94
& 2.99
& \textbf{4.54}
& 3.36
& 3.43
& \textbf{4.16} \\

\addlinespace
& Pixel Vendi
& \textbf{10.35} & 10.29 & 10.29
& 10.75 & 10.68 & \textbf{10.98}
& 5.55 & 5.47 & \textbf{7.01}
& 11.42 & 11.37 & \textbf{12.10} \\

& Pixel Vendi (grouped)
& 6.86 & 6.83 & \textbf{7.15}
& 6.80 & 6.78 & \textbf{7.34}
& 4.11 & 4.06 & \textbf{5.45}
& 6.89 & 6.87 & \textbf{7.59} \\

\addlinespace
& Mean pairwise \(L_2\)
& 299.9 & 303.6 & \textbf{428.2}
& 286.4 & 289.1 & \textbf{412.7}
& 243.2 & 246.0 & \textbf{394.2}
& 290.3 & 294.8 & \textbf{367.7} \\

\addlinespace
& CLIP score
& 31.83 & \textbf{31.84} & 31.31
& 32.84 & \textbf{32.92} & 32.38
& 32.25 & \textbf{32.26} & 30.55
& 31.64 & \textbf{31.66} & 31.28 \\

& CLIP-IQA
& \textbf{0.503} & \textbf{0.503} & 0.502
& \textbf{0.503} & \textbf{0.503} & 0.502
& \textbf{0.503} & \textbf{0.503} & 0.501
& \textbf{0.502} & \textbf{0.502} & \textbf{0.502} \\

& HPSv2
& \textbf{0.298} & 0.297 & 0.273
& \textbf{0.319} & 0.318 & 0.292
& 0.287 & \textbf{0.288} & 0.249
& \textbf{0.312} & \textbf{0.312} & 0.290 \\

\midrule

\multirow{8}{*}{\textsc{Detailed}}
& Feature Vendi
& 2.23
& 2.23
& \textbf{2.73}
& 3.55
& 3.62
& \textbf{4.19}
& 2.38
& 2.39
& \textbf{3.37}
& 3.34
& 3.36
& \textbf{3.75} \\

& Feature Vendi (grouped)
& 1.99
& 1.99
& \textbf{2.36}
& 2.91
& 2.95
& \textbf{3.34}
& 2.10
& 2.10
& \textbf{2.79}
& 2.78
& 2.79
& \textbf{3.06} \\

\addlinespace
& Pixel Vendi
& 9.86 & 9.79 & \textbf{10.14}
& 8.42 & 8.32 & \textbf{9.51}
& 4.86 & 4.75 & \textbf{6.60}
& 10.24 & 10.20 & \textbf{10.90} \\

& Pixel Vendi (grouped)
& 6.52 & 6.49 & \textbf{6.97}
& 5.58 & 5.53 & \textbf{6.46}
& 3.69 & 3.62 & \textbf{5.06}
& 6.32 & 6.30 & \textbf{6.86} \\

\addlinespace
& Mean pairwise \(L_2\)
& 275.3 & 278.6 & \textbf{388.2}
& 255.2 & 256.9 & \textbf{351.3}
& 228.9 & 231.0 & \textbf{353.8}
& 270.5 & 274.4 & \textbf{321.4} \\

\addlinespace
& CLIP score
& 33.23 & \textbf{33.28} & 32.93
& 34.63 & \textbf{34.65} & 34.36
& 31.16 & \textbf{31.19} & 30.47
& \textbf{32.21} & 32.20 & 31.89 \\

& CLIP-IQA
& \textbf{0.503} & \textbf{0.503} & \textbf{0.503}
& \textbf{0.503} & \textbf{0.503} & 0.502
& \textbf{0.503} & \textbf{0.503} & 0.502
& \textbf{0.502} & \textbf{0.502} & \textbf{0.502} \\

& HPSv2
& \textbf{0.319} & 0.318 & 0.303
& \textbf{0.321} & 0.320 & 0.308
& 0.278 & \textbf{0.279} & 0.253
& \textbf{0.311} & \textbf{0.311} & 0.298 \\

\bottomrule
\end{tabular}%
}
\end{subtable}

\vspace{0.1cm}

\begin{subtable}{\linewidth}
\caption{Characters, Food, Art, and Text.}
\label{tab:per_category_2}
\resizebox{\linewidth}{!}{%
\begin{tabular}{@{}ll *{4}{@{\hspace{8pt}}ccc}@{}}
\toprule
& & \multicolumn{3}{c}{Characters}
& \multicolumn{3}{c}{Food}
& \multicolumn{3}{c}{Art}
& \multicolumn{3}{c}{Text} \\
\cmidrule(lr){3-5}
\cmidrule(lr){6-8}
\cmidrule(lr){9-11}
\cmidrule(lr){12-14}
Tier & Metric
& Det. & OSCAR & JIVE
& Det. & OSCAR & JIVE
& Det. & OSCAR & JIVE
& Det. & OSCAR & JIVE \\
\midrule

\multirow{8}{*}{\textsc{General}}
& Feature Vendi
& 3.24
& 3.23
& \textbf{4.22}
& 2.27
& 2.25
& \textbf{3.07}
& 4.21
& 4.18
& \textbf{5.37}
& 4.22
& 4.42
& \textbf{7.35} \\

& Feature Vendi (grouped)
& 2.70
& 2.70
& \textbf{3.38}
& 2.03
& 2.02
& \textbf{2.62}
& 3.34
& 3.30
& \textbf{4.06}
& 3.26
& 3.36
& \textbf{5.12} \\

\addlinespace
& Pixel Vendi
& 9.97 & 9.91 & \textbf{10.75}
& 9.27 & 9.20 & \textbf{10.29}
& 10.00 & 10.00 & \textbf{10.92}
& \textbf{11.09} & 11.04 & 10.76 \\

& Pixel Vendi (grouped)
& 6.32 & 6.29 & \textbf{7.04}
& 6.17 & 6.14 & \textbf{6.99}
& 6.24 & 6.24 & \textbf{7.04}
& 6.90 & 6.88 & \textbf{7.22} \\

\addlinespace
& Mean pairwise \(L_2\)
& 299.9 & 303.4 & \textbf{413.6}
& 326.4 & 329.9 & \textbf{434.8}
& 306.1 & 308.3 & \textbf{401.9}
& 336.1 & 340.5 & \textbf{441.0} \\

\addlinespace
& CLIP score
& 35.49 & \textbf{35.55} & 34.81
& 31.30 & \textbf{31.33} & 31.25
& 33.25 & \textbf{33.29} & 32.16
& 32.76 & \textbf{32.79} & 32.63 \\

& CLIP-IQA
& \textbf{0.504} & \textbf{0.504} & 0.503
& \textbf{0.503} & \textbf{0.503} & 0.502
& \textbf{0.503} & \textbf{0.503} & \textbf{0.503}
& \textbf{0.503} & \textbf{0.503} & 0.502 \\

& HPSv2
& \textbf{0.338} & \textbf{0.338} & 0.316
& \textbf{0.303} & \textbf{0.303} & 0.281
& 0.289 & \textbf{0.290} & 0.269
& 0.294 & \textbf{0.295} & 0.266 \\

\midrule

\multirow{8}{*}{\textsc{Detailed}}
& Feature Vendi
& 3.23
& 3.27
& \textbf{3.67}
& 2.06
& 2.06
& \textbf{2.26}
& 3.26
& 3.21
& \textbf{3.66}
& 3.13
& 3.22
& \textbf{3.91} \\

& Feature Vendi (grouped)
& 2.68
& 2.71
& \textbf{3.00}
& 1.86
& 1.86
& \textbf{2.02}
& 2.70
& 2.66
& \textbf{2.98}
& 2.59
& 2.65
& \textbf{3.10} \\

\addlinespace
& Pixel Vendi
& 9.46 & 9.40 & \textbf{10.41}
& 8.50 & 8.44 & \textbf{9.68}
& 8.98 & 8.95 & \textbf{9.99}
& 10.15 & 10.09 & \textbf{10.86} \\

& Pixel Vendi (grouped)
& 5.99 & 5.96 & \textbf{6.71}
& 5.63 & 5.61 & \textbf{6.53}
& 5.65 & 5.64 & \textbf{6.36}
& 6.40 & 6.37 & \textbf{7.05} \\

\addlinespace
& Mean pairwise \(L_2\)
& 281.4 & 285.1 & \textbf{349.2}
& 303.2 & 306.4 & \textbf{385.8}
& 287.2 & 288.9 & \textbf{344.8}
& 290.0 & 293.6 & \textbf{383.9} \\

\addlinespace
& CLIP score
& 35.09 & \textbf{35.20} & 34.79
& \textbf{31.24} & 31.21 & 31.15
& 33.21 & \textbf{33.25} & 32.86
& 32.57 & \textbf{32.60} & 32.23 \\

& CLIP-IQA
& \textbf{0.504} & \textbf{0.504} & 0.503
& \textbf{0.503} & \textbf{0.503} & 0.502
& \textbf{0.503} & \textbf{0.503} & 0.502
& \textbf{0.503} & \textbf{0.503} & 0.502 \\

& HPSv2
& \textbf{0.329} & \textbf{0.329} & 0.319
& \textbf{0.300} & 0.299 & 0.286
& 0.287 & \textbf{0.288} & 0.273
& \textbf{0.289} & \textbf{0.289} & 0.276 \\

\bottomrule
\end{tabular}%
}
\end{subtable}
\end{table}

\subsection{Detailed prompts}
\label{app:detailed-prompts}
Each entry below is index-matched, within its category, to the correspondingly
numbered entry in Appendix~\ref{app:general-prompts}: the two describe the same
underlying scene and differ only in specificity. This pairing is what makes the
tier contrast in Appendix~\ref{app:category-results} a controlled comparison.

\subsubsection{O --- Object}
\begin{enumerate}[label=O\arabic*., leftmargin=*]
  \item \textit{A glossy red sports car parked on a clean mountain road, viewed from a low front angle in soft morning light.}
  \item \textit{A handmade blue ceramic teapot with a curved handle on a rustic wooden table beside a folded linen cloth.}
  \item \textit{A black vintage film camera beside an open photography book on a dark wooden desk near a window.}
  \item \textit{A vintage brass compass resting on an old nautical map, lit by a flickering candle in a dim study.}
  \item \textit{A stack of three worn leather-bound books with gold lettering, tied with a frayed ribbon, on a dark mahogany shelf.}
  \item \textit{A clear glass vase holding a single white orchid, placed on a marble countertop near a sunlit window.}
  \item \textit{A luxury silver wristwatch with a deep blue dial and steel bracelet, resting on a polished black stone surface under dramatic studio lighting.}
  \item \textit{A well-worn acoustic guitar with a sunburst finish leaning against a vintage leather armchair in a warmly lit music room.}
  \item \textit{A pristine white leather sneaker displayed on a minimalist concrete pedestal in a brightly lit retail showroom.}
  \item \textit{A round red paper lantern glowing warmly, hanging from a dark wooden beam against a deep blue twilight sky.}
\end{enumerate}

\subsubsection{H --- Human/animal}
\begin{enumerate}[label=H\arabic*., leftmargin=*]
  \item \textit{A golden retriever sitting on green grass in a city park, looking toward the camera in warm afternoon sunlight.}
  \item \textit{A young woman wearing a long black coat walking along a busy city sidewalk on an overcast winter afternoon.}
  \item \textit{A small blue bird standing on a thin flowering tree branch against a softly blurred forest background.}
  \item \textit{A young girl in a bright yellow raincoat jumping over a puddle, splashing water on a rainy autumn street.}
  \item \textit{A black cat curled up asleep on a patterned rug in front of a crackling fireplace, with a steaming mug nearby.}
  \item \textit{An elderly man feeding pigeons on a wooden park bench, surrounded by golden autumn leaves in soft morning light.}
  \item \textit{A ballerina in a pale pink tutu stretching at a wooden barre in a sunlit dance studio with tall mirrors.}
  \item \textit{A red fox with a thick winter coat walking through fresh snow in a quiet birch forest, its breath visible in the cold air.}
  \item \textit{A young boy in a striped shirt flying a red diamond kite on a windy grassy hill under a bright blue sky with scattered clouds.}
  \item \textit{A barn owl with speckled feathers perched on a weathered wooden fence post at dusk, with a misty meadow fading into the background.}
\end{enumerate}

\subsubsection{N --- Natural}
\begin{enumerate}[label=N\arabic*., leftmargin=*]
  \item \textit{A tropical beach with several palm trees, turquoise water, white sand, and scattered clouds at midday.}
  \item \textit{A calm alpine lake surrounded by tall pine trees and snow-covered mountains, with reflections visible in the water.}
  \item \textit{A tall waterfall flowing into a clear pool inside a dense tropical forest covered with moss and broad green leaves.}
  \item \textit{A rolling green meadow dotted with wildflowers, a single large oak tree, and a distant mountain range under a vivid blue sky.}
  \item \textit{A frozen lake under a starry night sky, with snow-covered evergreens lining the shore and the aurora borealis glowing overhead.}
  \item \textit{A dramatic thunderstorm over a vast desert canyon, lightning striking the layered rock formations as dark clouds swirl above.}
  \item \textit{A dense bamboo forest with tall green stalks, narrow sunbeams cutting through the canopy onto a stone path covered in fallen leaves.}
  \item \textit{Sweeping red sand dunes rippled by wind, glowing orange under a low desert sun with long shadows and a cloudless gradient sky.}
  \item \textit{A slow river winding through a misty valley at dawn, with soft pink light on the fog and dark silhouettes of willow trees along the banks.}
  \item \textit{A rugged volcanic coastline with black sand beaches, crashing white waves, and jagged basalt cliffs under a dramatic overcast sky.}
\end{enumerate}

\subsubsection{S --- Structured}
\begin{enumerate}[label=S\arabic*., leftmargin=*]
  \item \textit{A minimalist modern living room with floor-to-ceiling windows, a gray sofa, a wooden coffee table, and bright natural light.}
  \item \textit{A crowded city street at night with illuminated storefronts, moving cars, wet pavement, and colorful reflections.}
  \item \textit{A historic European town square with a stone fountain in the center, outdoor caf\'{e}s, and pastel-colored buildings.}
  \item \textit{A steampunk airship docked at a floating Victorian-era platform, with brass gears, riveted panels, and billowing steam in a sunset sky.}
  \item \textit{A futuristic neon-lit alley in a cyberpunk city, with holographic signs, steam vents, and a lone figure in a dark trench coat.}
  \item \textit{A medieval castle interior with a long wooden banquet table set for a feast, torches flickering on stone walls, and colorful stained glass windows.}
  \item \textit{A cozy log cabin with warm golden windows and smoke curling from its stone chimney, nestled among snow-laden pine trees at blue hour.}
  \item \textit{A busy underground subway station during rush hour, with blurred commuters, an arriving train's headlights, and tiled walls lined with advertisements.}
  \item \textit{A serene Japanese temple garden with a five-story red pagoda, a koi pond with a wooden arched bridge, and carefully raked gravel under cherry blossoms.}
  \item \textit{A grand old library with towering wooden bookshelves, wrought-iron spiral staircases, green reading lamps, and dust motes floating in shafts of window light.}
\end{enumerate}

\subsubsection{C --- Compositional}
\begin{enumerate}[label=C\arabic*., leftmargin=*]
  \item \textit{A chef slicing vegetables on a wooden cutting board in a professional kitchen, with cooking utensils hanging behind them.}
  \item \textit{A panda wearing a white spacesuit standing on the moon, with a small flag nearby and Earth visible in the background.}
  \item \textit{A blue bicycle leaning against a red brick wall, with a brown basket attached to the front and fallen leaves on the ground.}
  \item \textit{A giant octopus wearing a top hat and monocle, playing a grand piano in an underwater ballroom with glowing jellyfish chandeliers.}
  \item \textit{A tiny astronaut exploring a giant sunflower, with dewdrops like planets and a ladybug resting nearby like a parked spaceship.}
  \item \textit{A steampunk owl with copper feathers and brass goggles, perched on a mechanical branch in a clockwork forest.}
  \item \textit{A giraffe wearing a knitted red scarf riding a vintage mint-green scooter down a cobblestone street, with market stalls blurring past.}
  \item \textit{A small round robot with brass fittings gently watering potted tomato plants on a sunny apartment balcony overlooking a quiet city.}
  \item \textit{A giant blue whale floating serenely above a golden desert at dusk, casting a long shadow over the dunes as birds fly alongside it.}
  \item \textit{A medieval knight in polished armor playing chess with a small emerald dragon at a stone table inside a torch-lit castle tower.}
\end{enumerate}

\subsubsection{F --- Food}
\begin{enumerate}[label=F\arabic*., leftmargin=*]
  \item \textit{A gourmet cheeseburger with melted cheddar, caramelized onions, and a brioche bun on a rustic ceramic plate beside golden fries.}
  \item \textit{A steaming bowl of ramen with a soft-boiled egg, chashu pork, and scallions, with wooden chopsticks resting on the rim in a cozy shop.}
  \item \textit{A tall stack of fluffy pancakes with melting butter and maple syrup dripping down the sides, topped with fresh blueberries on a white plate.}
  \item \textit{A ceramic cup of latte with intricate rosetta foam art on a wooden caf\'{e} table beside a small vase of dried flowers.}
  \item \textit{A three-layer dark chocolate cake with glossy ganache, topped with fresh raspberries and mint leaves on an elegant cake stand.}
  \item \textit{A glossy fruit tart with kiwi, strawberries, and blueberries arranged in a spiral, displayed on a marble bakery shelf under warm light.}
  \item \textit{An artfully arranged sushi platter with nigiri, maki rolls, pickled ginger, and wasabi on a dark wooden board with chopsticks and soy sauce.}
  \item \textit{A wood-fired margherita pizza with bubbling mozzarella and fresh basil being sliced on a wooden peel in a rustic Italian kitchen.}
  \item \textit{A waffle cone with two scoops of pastel pink strawberry and mint ice cream, beginning to melt, against a bright summer boardwalk backdrop.}
  \item \textit{A morning breakfast table with a French press of coffee, flaky croissants, orange juice, and a folded newspaper in soft window light.}
\end{enumerate}

\subsubsection{A --- Artistic}
\begin{enumerate}[label=A\arabic*., leftmargin=*]
  \item \textit{A loose watercolor painting of a quiet harbor at dawn, with soft washes of pink and gray, moored sailboats, and gentle reflections bleeding into the paper.}
  \item \textit{An impressionist oil painting of a sunflower field under a swirling turquoise sky, with thick visible brushstrokes and rich impasto texture.}
  \item \textit{A bold geometric abstract composition of interlocking circles and triangles in terracotta, teal, and cream, with subtle grain texture and balanced asymmetry.}
  \item \textit{An expressive charcoal sketch of a dancer mid-leap, with sweeping gestural lines, smudged shadows, and dynamic movement captured on textured paper.}
  \item \textit{A layered paper-cut artwork of a forest with deer, crafted from sheets of green and gold paper, casting delicate shadows in a lightbox display.}
  \item \textit{A vibrant stained glass window depicting a rising phoenix, with ruby and amber glass segments outlined in dark lead, glowing in transmitted light.}
  \item \textit{An intricate ceramic tile mosaic of a sea turtle swimming through coral, made of hundreds of small turquoise, jade, and sand-colored tiles.}
  \item \textit{A surreal pen-and-ink drawing of a floating city with impossible staircases and hanging gardens, rendered in fine crosshatching on aged paper.}
  \item \textit{A low-poly 3D render of a snow-capped mountain and pine forest at sunrise, with flat-shaded facets in soft gradients of pink, purple, and blue.}
  \item \textit{A vibrant graffiti mural of a roaring lion with a mane of swirling colorful patterns, spray-painted on a brick wall with paint drips.}
\end{enumerate}

\subsubsection{T --- Text/signage}
\begin{enumerate}[label=T\arabic*., leftmargin=*]
  \item \textit{A buzzing red neon sign spelling `Open' in cursive, mounted on the dark window of a late-night diner with rain outside.}
  \item \textit{A hand-carved wooden sign reading `Welcome' in rustic lettering, hanging from chains on the door of a countryside bed and breakfast.}
  \item \textit{A retro science-fiction movie poster titled `Starlight' in bold chrome lettering, showing a rocket soaring past a ringed planet.}
  \item \textit{A cozy caf\'{e} chalkboard menu with `Today's Special: Pumpkin Soup' written in decorative white chalk lettering, framed by drawn vines.}
  \item \textit{A vintage art-deco travel poster of Paris with `Visit Paris' in elegant typography above a stylized Eiffel Tower at sunset.}
  \item \textit{A moody book cover titled `The Last Lighthouse' in embossed serif type, showing a lone lighthouse on a cliff under storm clouds.}
  \item \textit{A large street mural with the word `Hope' painted in rainbow-gradient block letters across a city wall, with small birds perched on the letters.}
  \item \textit{A pastel birthday cake with `Happy Birthday' piped in blue icing, surrounded by lit candles and colorful sprinkles on a party table.}
  \item \textit{A split-flap airport departure board displaying `Tokyo 09:45 On Time' in white letters, glowing in a busy terminal.}
  \item \textit{A classic theater marquee with `Tonight' in glowing bulb letters, above a red velvet entrance on a rainy evening street.}
\end{enumerate}

\clearpage
\subsection{General prompts}
\label{app:general-prompts}
The general-tier counterparts of Appendix~\ref{app:detailed-prompts}, matched by
category and index.

\begin{multicols}{2}
\setlist[enumerate]{itemsep=1pt,topsep=3pt,parsep=0pt}

\subsubsection{O --- Object}
\begin{enumerate}[label=O\arabic*., leftmargin=*]
  \item \textit{A red sports car on a road.}
  \item \textit{A ceramic teapot on a table.}
  \item \textit{A vintage camera beside a book.}
  \item \textit{A brass compass on a map.}
  \item \textit{A stack of old books on a shelf.}
  \item \textit{A glass vase with a flower on a counter.}
  \item \textit{A silver wristwatch on a black surface.}
  \item \textit{An acoustic guitar leaning against a chair.}
  \item \textit{A white sneaker on a pedestal.}
  \item \textit{A paper lantern glowing in the dark.}
\end{enumerate}

\subsubsection{H --- Human/animal}
\begin{enumerate}[label=H\arabic*., leftmargin=*]
  \item \textit{A golden retriever in a park.}
  \item \textit{A woman wearing a black coat in the city.}
  \item \textit{A bird standing on a tree branch.}
  \item \textit{A girl in a raincoat jumping over a puddle.}
  \item \textit{A cat sleeping on a rug by a fireplace.}
  \item \textit{A man feeding pigeons on a park bench.}
  \item \textit{A ballerina stretching at a barre.}
  \item \textit{A fox walking through snow.}
  \item \textit{A boy flying a kite on a hill.}
  \item \textit{An owl perched on a fence post.}
\end{enumerate}

\subsubsection{N --- Natural}
\begin{enumerate}[label=N\arabic*., leftmargin=*]
  \item \textit{A beach with palm trees.}
  \item \textit{A mountain lake surrounded by pine trees.}
  \item \textit{A waterfall in a tropical forest.}
  \item \textit{A meadow with an oak tree and mountains.}
  \item \textit{A frozen lake at night with trees.}
  \item \textit{A thunderstorm over a canyon.}
  \item \textit{A bamboo forest with sunbeams.}
  \item \textit{A desert dune at sunset.}
  \item \textit{A misty river at dawn.}
  \item \textit{A volcanic coastline with black sand.}
\end{enumerate}

\subsubsection{S --- Structured}
\begin{enumerate}[label=S\arabic*., leftmargin=*]
  \item \textit{A modern living room with large windows.}
  \item \textit{A busy city street at night.}
  \item \textit{A European town square with a fountain.}
  \item \textit{A steampunk airship at a platform.}
  \item \textit{A futuristic neon alley in a city.}
  \item \textit{A medieval castle banquet hall.}
  \item \textit{A cozy mountain cabin in winter.}
  \item \textit{A subway station during rush hour.}
  \item \textit{A Japanese temple garden with a pagoda.}
  \item \textit{An old library with spiral staircases.}
\end{enumerate}

\subsubsection{C --- Compositional}
\begin{enumerate}[label=C\arabic*., leftmargin=*]
  \item \textit{A chef preparing food in a kitchen.}
  \item \textit{A panda wearing a spacesuit on the moon.}
  \item \textit{A blue bicycle against a brick wall.}
  \item \textit{An octopus playing a piano underwater.}
  \item \textit{A tiny astronaut on a sunflower.}
  \item \textit{A steampunk owl on a branch.}
  \item \textit{A giraffe wearing a scarf riding a scooter.}
  \item \textit{A robot watering plants on a balcony.}
  \item \textit{A whale floating above a desert.}
  \item \textit{A knight playing chess with a dragon.}
\end{enumerate}

\subsubsection{F --- Food}
\begin{enumerate}[label=F\arabic*., leftmargin=*]
  \item \textit{A cheeseburger on a plate.}
  \item \textit{A bowl of ramen with chopsticks.}
  \item \textit{A stack of pancakes with syrup.}
  \item \textit{A cup of latte with foam art.}
  \item \textit{A chocolate cake with berries.}
  \item \textit{A fruit tart on a bakery shelf.}
  \item \textit{A sushi platter on a wooden board.}
  \item \textit{A pizza being sliced.}
  \item \textit{An ice cream cone with two scoops.}
  \item \textit{A breakfast table with coffee and croissants.}
\end{enumerate}

\subsubsection{A --- Artistic}
\begin{enumerate}[label=A\arabic*., leftmargin=*]
  \item \textit{A watercolor painting of a harbor.}
  \item \textit{An oil painting of a sunflower field.}
  \item \textit{A geometric abstract composition.}
  \item \textit{A charcoal sketch of a dancer.}
  \item \textit{A paper-cut artwork of a forest.}
  \item \textit{A stained glass window of a phoenix.}
  \item \textit{A mosaic of a sea turtle.}
  \item \textit{A surreal ink drawing of a floating city.}
  \item \textit{A low-poly 3D render of a mountain.}
  \item \textit{A graffiti mural of a lion.}
\end{enumerate}

\subsubsection{T --- Text/signage}
\begin{enumerate}[label=T\arabic*., leftmargin=*]
  \item \textit{A neon sign that says `Open'.}
  \item \textit{A wooden sign that says `Welcome'.}
  \item \textit{A movie poster titled `Starlight'.}
  \item \textit{A caf\'{e} chalkboard menu.}
  \item \textit{A vintage travel poster of Paris.}
  \item \textit{A book cover titled `The Last Lighthouse'.}
  \item \textit{A street mural with the word `Hope'.}
  \item \textit{A birthday cake that says `Happy Birthday'.}
  \item \textit{An airport departure board.}
  \item \textit{A marquee sign that says `Tonight'.}
\end{enumerate}

\end{multicols}

\end{document}